\DeclareMathOperator*{\Var}{Var}
\newcommand{\1}{\mathbb{1}}
\theoremstyle{plain}
\newtheorem{theorem}{Theorem}[section]
\newtheorem{proposition}[theorem]{Proposition}
\newtheorem{lemma}[theorem]{Lemma}
\theoremstyle{definition}
\newtheorem{assumption}[theorem]{Assumption}
\theoremstyle{remark}
\DeclareMathOperator{\clip}{clip}
\title{Training Language Models to Reason Efficiently}
\author{%
Daman Arora \\\
  Carnegie Mellon University\\
  \texttt{damana@andrew.cmu.edu} \\
  \And
  Andrea Zanette \\
  Carnegie Mellon University \\
  \texttt{zanette@cmu.edu} \\
}
\begin{document}

\maketitle

\vskip 0.3in




\begin{abstract}
Scaling model size and training data has led to great advances in the performance of Large Language Models (LLMs). However, the diminishing returns of this approach necessitate alternative methods to improve model capabilities, particularly in tasks requiring advanced reasoning. Large reasoning models, which leverage long chain-of-thoughts, bring unprecedented breakthroughs in  problem-solving capabilities but at a substantial deployment cost associated to longer generations. Reducing inference costs is crucial for the economic feasibility, user experience, and environmental sustainability of these models.

In this work, we propose to train large reasoning models to reason efficiently. Our method incentivizes models to minimize unnecessary computational overhead while largely maintaining accuracy, thereby achieving substantial deployment efficiency gains. It  enables the derivation of a family of reasoning models with varying efficiency levels, controlled via a single hyperparameter. Experiments on two open-weight large reasoning models demonstrate significant reductions in inference cost while preserving most of the accuracy. 
 
\end{abstract}

\section{Introduction}
\label{sec:introduction}
Large language models (LLMs) have made significant advancements by pre-training larger models with extensive datasets \citep{kaplan2020scaling}, but this approach faces diminishing returns due to limited high-quality training data.
An alternative to improve model capabilities, especially in domains involving careful reasoning, involves allowing models to ``think" before answering, as seen in frontier reasoning models like OpenAI's o1 \citep{openai2024openaio1card}, Gemini 2.0 Flash Thinking Experimental, and DeepSeek-R1 \citep{guo2025deepseek}. These models produce intermediate tokens during inference, collectively referred to as \emph{chain-of-thoughts} \citep{wei2022chain}, to perform additional computations before returning an answer. The process of generating a long chain of thought before answering the user query is called \emph{reasoning}.
More precisely, \emph{large reasoning models} with chain-of-thoughts capable of performing advanced reasoning emerge from RL \citep{sutton2018reinforcement, guo2025deepseek} on base models using ground-truth scoring functions (e.g., correctness on math problems). 



\begin{figure*}[ht!]
\centering
  \includegraphics[width=0.95\textwidth]{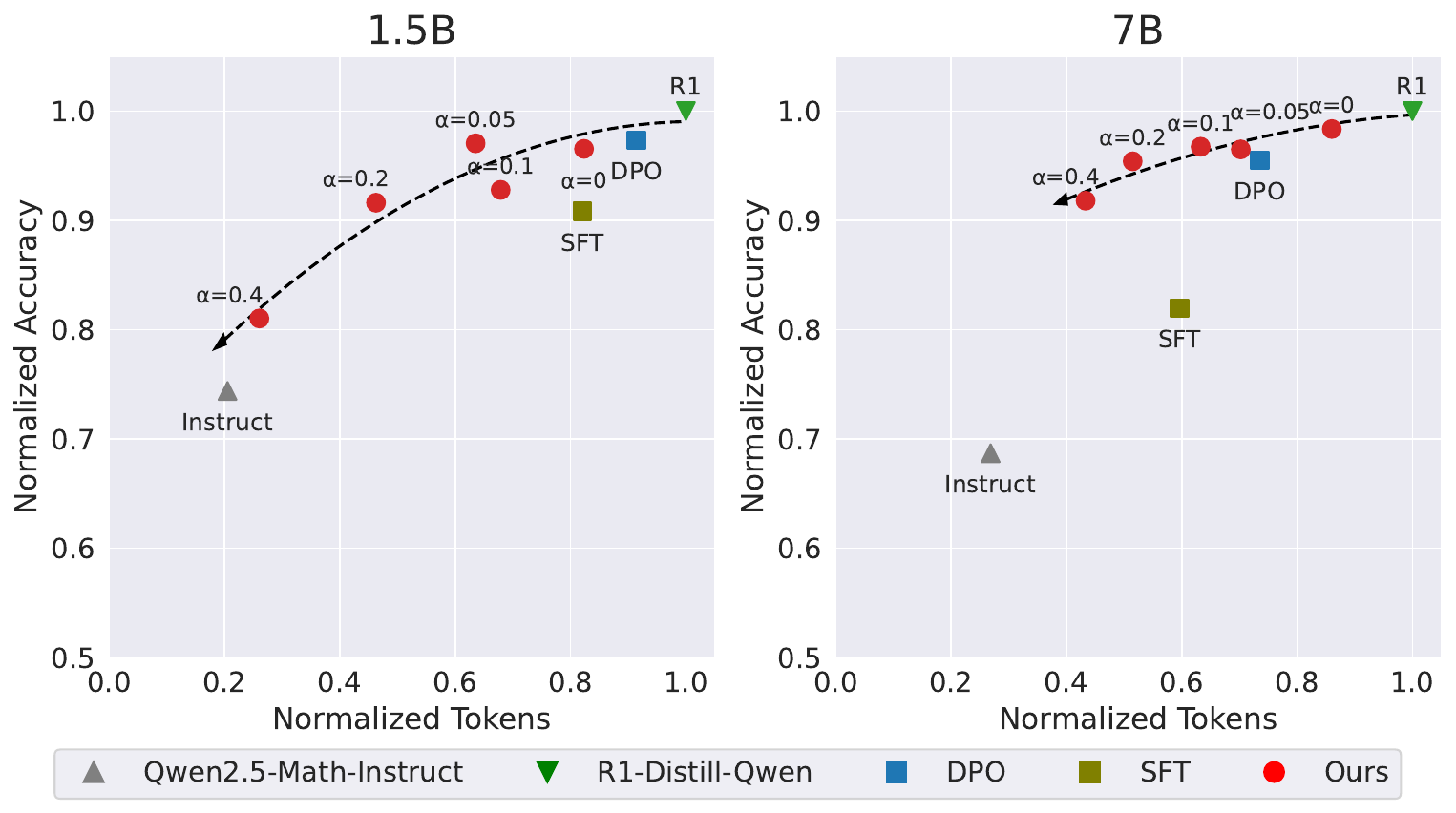}
  \caption{This figure describes the results of our training procedure aggregated on 5 evaluation datasets on both 1.5B and 7B scales. The x-axis represents response length normalized by the performance of the R1-Distilled model and the y-axis represents accuracy normalized by the accuracy of the R1-Distilled model. Each value of $\alpha$ has been aggregated over 3 seeds. Our training reward design allows us to easily and effectively navigate the token-accuracy tradeoff curve. Increasing the value of $\alpha$ generally results in models that have lower response lengths.}
  \label{fig:master_summary}
\end{figure*}

These reasoning models use test-time compute in the form of very long chain-of-thoughts, an approach that incur a high inference cost due to the quadratic cost of the attention mechanism and linear growth of the KV cache for transformer-based architectures \cite{vaswani2017attention}. However, effective deployment of LLMs demands models that are not only capable but also computationally efficient to serve. 
Even for resource-rich organizations such as large tech companies that have the resources to train reasoning models, excessive inference costs may mean operating at a loss rather than at a profit in order to match the competitor's offering.
Furthermore, reducing inference costs often reduces latency, improves responsiveness, and therefore improves user experience. Finally, lowering the inference computation positively benefits the environment by reducing carbon emissions.


To achieve this goal, we use policy gradient methods \citep{sutton2018reinforcement} to train the model to use lower number of tokens to reach the correct solution, thereby minimizing inference costs, ideally without compromising on accuracy. 
We use a modified reinforcement learning reward function which \emph{encourages the model to produce correct answers with short chain-of-thoughts}.
At the time of the initial publication of our pre-print on arXiv, we were among the first to consider training the model to be efficient at inference time, and
we discuss concurrent as well as related  literature in \cref{sec:relatedwork}. 
Our method allows the user to  \emph{control} the reduction in inference compute by adjusting a scalar coefficient in an intuitive way. In other words, 
starting from a reasoning model, our procedure allows us to derive a \emph{family} of models, each with increased generation efficiency (i.e., shorter chain-of-thoughts). 

We perform numerical experiments on two recently released open-weight large reasoning models, DeepSeek-R1-Distill-Qwen-1.5B and DeepSeek-R1-Distill-Qwen-7B \cite{guo2025deepseek}
and derive models with a substantial reduction in reasoning cost while approximately maintaining accuracy, see \cref{fig:master_summary} for a summary of our results. We observe that our training procedure allows us to gracefully navigate the compute-performance tradeoff curve, that is, reduce compute significantly with minimal loss in performance. For instance, for the 7B model, we can achieve a reduction of 50\% in tokens with less than 5\% reduction in accuracy.

We also observe that the reduction in response length depends on the hardness of the problem. For the 7B model, our method can reduce token usage by 16\% on the competition-level benchmark: American Invitational Mathematics Examination 2024 with a drop of 3.3\% points in accuracy, up to 37\% tokens on the MATH500 \citep{hendrycks2021measuring} dataset with a drop of 2.2\% in accuracy, and up to 65\% tokens on the GSM8K \citep{cobbe2021training} dataset with a drop of 1.7\% in accuracy. 


Our work only requires a couple of lines of changes to any standard reinforcement learning implementation. Beyond its simplicity, an attractive property of our approach is its \emph{computational efficiency}:
although training reasoning models
with large scale RL
has a prohibitive cost \cite{guo2025deepseek},
our procedure shows that training them to reason efficiently is highly viable
even with modest academic resources: 
our models are obtained with only 100 RL steps ($\sim$ 200 gradient updates). 
The fact that we achieve a performance comparable to that of the original reasoning model with a short training is surprising because in a few RL steps the model learns to optimize for shorter, more efficient reasoning patterns compared to the original model. All of our code and trained models are public at \url{https://github.com/Zanette-Labs/efficient-reasoning}. 

\section{Related Work}
\label{sec:relatedwork}

\textbf{Improving model capabilities with test-time compute} Several techniques such as Chain of Thoughts \citep{wei2022chain}, Self Consistency \citep{wang2022self}, Best-of-N, Monte Carlo Tree Search \citep{silver2017mastering}, Tree-of-thoughts \citep{yao2024tree}, Stream-of-Search \citep{gandhi2024stream}, Graph-of-Thoughts \citep{besta2024graph}, Process Reward Models \citep{lightman2024lets} and Self-Correction \citep{kumar2024training, welleck2023generating} have been proposed to improve performance by spending more inference time compute. It has also been shown that in some cases scaling inference compute can be more effective than scaling model size \citep{snell2025scaling, liu20251bllmsurpass405b}.  While the above mentioned techniques 
can be effective in specialized scenarios, 
modern large scale reasoning models
are trained with large scale RL with verifiable rewards and perform autoregressive generation.

\textbf{Large Reasoning Models}
Frontier reasoning such as OpenAI o1, Deepseek R1 and QwQ-Preview rely on long, monolithic chain-of-thoughts to perform advanced reasoning.   
They are trained with large scale reinforcement learning \cite{guo2025deepseek}, which leads them to develop abilities such as branching, verification and backtracking. 
Our approach aims to make these models more efficient.

\textbf{Efficient Serving}
While we focus on developing reasoning models that can be served efficiently, our approach is orthogonal to existing methods from the literature of efficient LLMs; see \citet{zhou2024surveyefficientinferencelarge} for a recent survey. 
For example, system-level techniques
build a system to accelerate inference. Some examples include speculative decoding \cite{leviathan2023fast} and batch engines like vLLM \cite{kwon2023efficient}; both can be directly combined with our method. Model-based techniques, on the other hand, act directly on the model to accelerate inference. Some examples include weight pruning \cite{liu2018rethinking} and quantization \cite{lin2024awq}, which are all complementary and can combined with our methodology.


\textbf{Concurrent works} \citet{chen2024think23overthinkingo1like} investigate the overthinking phenomena and propose methods to mitigate it by using heuristics such as First-Correct Solutions (FCS) and Greedy Diverse Solutions (GDS) to generate preference data which is then used for offline policy optimization. However, this method doesn't allow easily tuning the model to the user's compute budget. At the time of our pre-print coming online on arXiv, several papers with similar objectives were released as well. For instance, the Kimi k1.5 \citep{kimiteam2025kimik15scalingreinforcement} also report a method to shorten the chain-of-thought using a length penalty in the reward function while doing online RL, a procedure similar in principle but not identical to ours.
We note that their procedure does not appear to have a tunable parameter which allows to obtain a family of models--each with varying trade-offs--as we do. Another work in this direction is O1-Pruner \citep{luo2025o1prunerlengthharmonizingfinetuningo1like} which proposes a slightly different offline RL objective to minimize tokens while maintaining accuracy while we use online RL. Subsequent to our pre-print, multiple works have tried to look into this research direction. For instance, \cite{qu2025optimizingtesttimecomputemeta} pose the problem of minimizing compute usage as a meta-RL problem and use online RL to solve it. \cite{aggarwal2025l1controllinglongreasoning} train reasoning models to follow exact token constraints specified in prompts. \cite{xia2025tokenskipcontrollablechainofthoughtcompression} prune tokens from Chain-of-Thoughts using semantic importance and perform SFT to obtain models with controllable compression.

\textbf{Efficiency of Chain-of-Thought} \citet{jin-etal-2024-impact} find that lengthening chain-of-thought has a correlation with improving performance. Conditional training as done by \citet{kang2024c3otgeneratingshorterchainofthought} is also another approach to the problem of generating shorter chain-of-thoughts. Explicitly trying to control the number of tokens by prompt engineering has been explored by \citet{nayab2025concisethoughtsimpactoutput} and \citet{han2024tokenbudgetawarellmreasoning}. However, none of these methods have explored models that generate a long CoT and don't use RL to train models to be less verbose. Also, they hinge on the critical assumption that models can follow instructions of the format: `Please answer this query in less than X tokens'. However, we find that distilled reasoning models are not capable of following such instructions. In fact, there has been work on this after our pre-print was released which trains models to explicitly follow such instructions \citep{aggarwal2025l1controllinglongreasoning}. See Appendix Section \ref{sec:length_constraints} for a detailed discussion.


\section{Setup}
\label{sec:setup}

\def\p{\ensuremath{p}}
\def\x{\ensuremath{x}}
\def\y{\ensuremath{y}}
\def\cot{\ensuremath{c}}
\def\answer{\ensuremath{y}}
\def\t{\ensuremath{t}}
\def\reward{\ensuremath{r}}
\def\dist{\ensuremath{\rho}}
\def\E{\mathbb E}
\def\f{\ensuremath{f}}
\def\rshape{g}
\def\len{\textsc{len}}
\def\1{{1}}
\def\goldanswer{\answer^\star}
\def\coeff{\alpha}
\def\std{\textsc{std}}
\def\mean{\textsc{mean}}
\def\accuracy{\textsc{Accuracy}}

Let $\p$ be a language model. When provided with a prompt $\x$, the language model produces a response $\y = (\y^1,\y^2,...,\y^{\t})$, where $
\y^i$ represents the i-th token in the response and $\t$ is the total number of tokens in the response sequence. More precisely, the generation is \emph{auto-regressive}, meaning that given the prompt $\x$ and the tokens $\y^{\leq k} = (\y^1,\y^2,...,\y^{k})$ generated so far, the next token $\y^{k+1}$ is generated from the conditional model 
\begin{align}
\label{eqn:auto-regressive}
\y^{k+1} \sim \p(\cdot \mid \x, \y^{\leq k} ).
\end{align}
The auto-regressive generation stops when the language model $\p$ outputs the end-of-sequence (EOS) token. 
Therefore, if $\y = (\y^1,\y^2,...,\y^{\t})$ is a full response,
$\y^{\t}$ is always the EOS token. 
With a little abuse of notation, we also let $\y \sim \p(\cdot \mid \x)$  denote the process of  sampling the full response $\y = (\y^1,\y^2,...,\y^{\t})$ from the model $\p$ via auto-regressive sampling according to \cref{eqn:auto-regressive}. 

\paragraph{Chain-of-Thoughts}
Chain of thoughts, introduced by \cite{wei2022chain}, 
is a key framework to implement reasoning. 
Given a prompt \x{}, the LLM is said to produce a ``chain of thought'' 
when it produces intermediate tokens that are not part of the output
before generating the final answer in an autoregressive way. 
Typically, the final answer 
is not formally separated from the chain-of-thoughts, 
and so we let $\answer$ denote the full output of the model 
$\answer \sim \p(\x)$.

\paragraph{Objective function and reinforcement learning}
We consider problems where the responses generated from an LLM can be
 evaluated by a scoring function $\f(\x,\answer) \mapsto \mathbb R$, 
 often called \emph{reward model} or \emph{verifier}, 
 that measures the suitability of the response. 
For math problems, such as those that we consider in this paper, 
the reward function establishes whether the solution 
to the problem is correct 
\citep{cobbe2021training,hendrycks2021measuring}
\begin{align}
	\f(\x,\answer) 
	=
	\1\{ \answer = \goldanswer(\x) \}
\end{align}
where $\goldanswer(\x)$ is the correct answer to the math problem \x{}.
Since $\answer{}$ is the full output of the model, including the chain of thought, the relation $\answer = \goldanswer(\x)$
tests whether the final answer generated
by the model coincides with the gold answer, 
rather than checking the equivalence between strings.

Large reasoning models \cite{guo2025deepseek}
are reportedly trained with reinforcement learning \cite{sutton2018reinforcement}.
When a chain of thoughts is used, 
the objective function to maximize can be written as
\begin{equation}
\label{eqn:RL}
	\accuracy(\p) = \E_{\x \sim \dist} \E_{\answer \sim p(\x{})} \big[ \1\{ \answer = \goldanswer(\x) \} \big].
\end{equation}
where $\dist$ is the prompt distribution.
In the sequel, we simply write $\E$
to denote the expectation.
For math problems, maximizing \cref{eqn:RL} directly
maximizes the probability that the model correctly solves a random question from the prompt distribution.

\section{Method}

\begin{figure*}[t]
\centering
  \includegraphics[width=\textwidth]{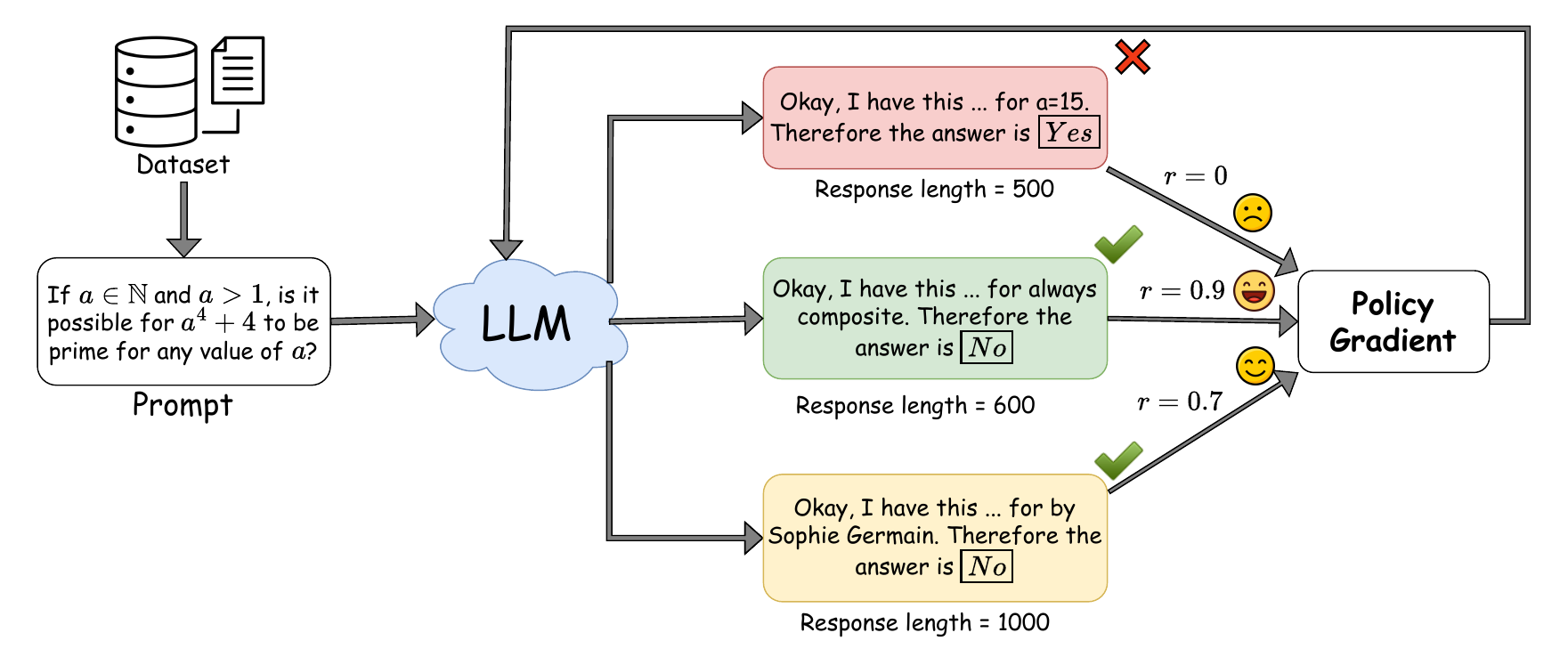}
  
  \caption{Pipeline depicting our method. For every prompt, multiple solutions are sampled and rewarded based on correctness and response length. The shortest correct answers are rewarded the highest and the language model is then updated using policy gradients.}
  \label{fig:main_pipeline}
\end{figure*}

We aim to design a method that trains models to use lower amount of inference time compute to arrive at the correct answer.
For simpler math problems, such as those in GSM8K \cite{cobbe2021training}, the model should recognize when it has reached the correct solution within a few hundred tokens. In contrast, for competition-level problems like those in the American Invitational Mathematics Examination (AIME), the model should be capable of expending thousands of tokens if that is necessary to find a strategy that solves these exceptionally challenging questions.

One attractive option is to train the model on an objective function derived from \cref{eqn:RL} that \emph{encourages the model to produce correct solutions with the minimum amount of tokens}.
In order to achieve the latter goal, we  penalize the length 
of  the correct responses
\begin{equation}
\label{eqn:lenpen}
    	\E\Big[ 
        \1\{\answer = \goldanswer(\x{}) \}
	( 1 - \coeff f(\len(\answer) ) \Big]
\end{equation}
using a monotonic function $f$ of the input and a tunable parameter $\coeff \in [0, 1)$.
The choice $\coeff = 0$ yields the
reinforcement learning objective \eqref{eqn:RL}; increasing $\coeff$ increases the regularization towards shorter---but correct---responses.

In order to ensure that the length 
regularization is effective,
we first normalize the length of the responses and then use the sigmoid function $\sigma$ to soft-clip it, obtaining
\begin{equation}
f(\len(\answer{}))
=
    \sigma \left( \frac{\len(\answer{}) - \mean(\x{})}{\std(\x{})} \right)
\end{equation}
where 
\[
\mean(\x{}) = \mathop{\E}_{\substack{y\sim \p(\x),\\ \text{s.t.} 
\; \1\{\answer = \goldanswer \} = 1 }} \left[\len(\answer)\right], 
\quad \quad \quad 
\std(\x) = \sqrt{\Var_{\substack{\answer\sim \p(\x),\\ \text{s.t.} 
\; \1\{\answer = \goldanswer \} = 1 }}{\left[\len(\answer)\right]}}
\]
are the \emph{per-prompt} mean and standard deviation of the length, 
respectively. 
The per-prompt normalization ensures that longer chains of thought on hard problems are not disproportionately penalized compared to shorter ones on easier problems.
When $\alpha \in [0, 1)$, the sigmoid  ensures that the objective function is always bounded between $[0, 1]$ even for abnormally long or short generations, and that
correct responses, even if long, are always preferred to incorrect ones.
In practice, both the standard deviation and the mean are directly estimated from the rollouts during online training.

\subsection{Optimizing the objective function  with Reinforcement Learning}

Since optimizing \cref{eqn:lenpen} involves sampling from the model auto-regressively,
the objective function is non-differentiable; 
however, it can be optimized with reinforcement learning, 
for instance with policy gradient methods
\citep{sutton2018reinforcement}.

One popular option is proximal policy optimization (PPO) \cite{schulman2017proximalpolicyoptimizationalgorithms} 
which considers the (local) objective function
\[ 
\min \{ f^t_{\theta}(y, x)\mathcal{A}(y^{<t}, x), \clip_{1-\epsilon}^{1+\epsilon}[ f^t_{\theta}(y, x)]\mathcal{A}(y^{<t}, x) \} \]
defined using the density ratio 
$$
f^t_{\theta}(y, x) = \frac{\pi_\theta(y^{t}|x + y^{<t})}{\pi_{old}(y^{t}|x + y^{<t})}
$$
and for a suitable choice for the advantage estimator $\mathcal{A}(y^{<t}, x)$. Traditionally, in deep reinforcement learning \cite{schulman2017proximalpolicyoptimizationalgorithms} the advantage estimator involves a neural network. 

With language models, maintaining a separate value network to obtain a variance-reduced advantage estimator  \cite{schulman2017proximalpolicyoptimizationalgorithms} may add significant computational and implementation complexity without necessarily increasing performance \cite{kool2019buy, ahmadian2024basicsrevisitingreinforcestyle}.
One simple and effective alternative is to just estimate the advantage using  Monte Carlo (MC) as proposed by \cite{kool2019buy, ahmadian2024basicsrevisitingreinforcestyle}. 
Such estimator is also called REINFORCE Leave One Out (RLOO) estimator. 
To be precise, the trajectory advantage can be estimated as

\[
\mathcal{A}(\answer_i, x)=\mathcal{R}(\answer_i, x) - \frac{1}{n-1}\sum_{j\neq i}\mathcal{R}(y_j, x)
\]
where $\mathcal{R}$ is the trajectory return and $\answer_i$ is the $i^{th}$ generation for prompt $x$.
We then simply use the sequence level advantage as the token level advantage, namely $\mathcal{A}(y^{<t}, x) = \mathcal{A}(y, x)$. In essence, we use PPO with the RLOO advantage estimator.

\subsection{Population-level optimality guarantees}
\label{sec:assumptions_and_props}
In this section we analyze the population-level maximizer of
\cref{eqn:lenpen} in a highly simplified setup and show how this can lead to the desired behavior of shortening the chain-of-thoughts without compromising accuracy. Additionally, we establish that the population-level maximizer is a model that generates the shortest correct response for any prompt in $\mathcal{X}$.

Consider the following simplified setup, where the language model $\p_\theta$ conditioned on a prompt $\x{}$ is a multinomial distribution over $N$  possible responses $\answer_1,\dots, \answer_N$.
More precisely, given $|\mathcal X|$ multinomial distributions $\p( \cdot \mid \x{})$ on the prompt space $\mathcal X$, there exists a value of the parameter $\theta$ that realizes such a choice. This is formalized by the following Assumptions.

\begin{assumption}[Tabular Representation]
\label{asm:realizability}
For every choice of $p$ such that
\begin{align}
  p(\answer_i \mid \x{}) \in [0,1], \quad \forall \x{} \in \mathcal X, i \in [N] \\
  \sum_i p(\answer_i \mid \x{}) = 1, \quad \forall \x{} \in \mathcal X
\end{align}
there exists a $\theta$ such that
\begin{align}
     p_\theta(\answer_i \mid \x{}) = p(\answer_i \mid \x{}), \quad \forall i \in [N], \forall \x{} \in \mathcal X.
\end{align}
\end{assumption}

This assumption can be justified by the expressive power of the neural network. The following assumption ensures coverage, namely that for every prompt, there exists at least a correct response that the LLM can output for an appropriate value of $\theta$.
It encodes the fact that an LLM 
can learn the correct solution if given enough data.
\begin{assumption}[Coverage]
\label{asm:coverage}
For all prompts $\x{} \in \mathcal X$, $\exists$ $\answer \in \{\answer_i \}_{i=1}^{N}$ such that 
$\answer = \goldanswer(\x{})$.
\end{assumption}

Let $\p_{\theta^\star}$ denote the reasoning model that is the population level maximizer of the accuracy:
\begin{equation}
\label{eqn:thetastar}
   \theta^\star = \arg\max_{\theta} \E_{\x \sim \dist} \E_{\answer \sim p_\theta(\x{})} \big[ \1\{ \answer = \goldanswer(\x) \} \big]
\end{equation}
where $\dist$ is the distribution over the prompts.
From our simplified setup, in particular from  \cref{asm:realizability} and \cref{asm:coverage} it is easy to see that 
\begin{equation}
    \accuracy(\p_{ \theta^\star}) = 1.
\end{equation}
In other words, if the language model has enough expressive power 
that it can cover the correct solution for each of the prompts, maximization of the population level RL training objective \eqref{eqn:thetastar} leads to a model that can output the correct solution over each prompt in the training dataset.

Let $\theta^\star_{eff}$ denote the population-level parameters of the reasoning model obtained by maximizing \cref{eqn:lenpen}, i.e.,
\begin{equation}
   \theta^\star_{eff} = \arg\max_{\theta} \Big\{ \E_{\x \sim \dist} \E_{\answer \sim p_\theta(\x{})} \big[ \1\{\answer = \goldanswer(\x{}) \} ( 1 - \coeff f(\len(\answer) ) \big] \Big\}
\end{equation}
for a certain choice of a monotonically increasing function $f(\cdot) \in [0,1]$ and scalar value $\alpha \in [0, 1)$.

We can prove that the population-level maximizer $\p_{\theta^\star_{eff}}$ 
is as accurate as the population-level maximizer  $\p_{\theta^\star}$ and that it recovers the shortest solution for every prompt in $\mathcal{X}$. For brevity, the proofs are moved to Section \ref{app:proofs} in the Appendix.
\begin{proposition}[Accuracy is Preserved]
\label{prop:accuracy}
    With the setup just described,  
    \begin{equation}
    \accuracy(\p_{\theta^\star_{eff}}) = 1.
    \end{equation}
\end{proposition}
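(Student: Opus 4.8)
The plan is to use the tabular structure of \cref{asm:realizability} to reduce the population-level problem to a family of independent finite-dimensional optimizations---one per prompt---and then to exploit the fact that, because $\coeff \in [0,1)$ and $f \in [0,1]$, the length multiplier $1 - \coeff f(\len(\cdot))$ is bounded away from zero, so that every optimal per-prompt distribution is forced to place all of its mass on correct responses.

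First I would decompose the objective. By \cref{asm:realizability}, for each $\x \in \mathcal X$ the conditional $p_\theta(\cdot \mid \x)$ ranges freely over all distributions on $\{\answer_1,\dots,\answer_N\}$, independently across prompts. Writing the population objective of \cref{eqn:lenpen} (whose integrand includes the correctness indicator) as $\E_{\x\sim\dist}\big[ G(\x,\, p_\theta(\cdot\mid\x))\big]$ with $G(\x,q) = \sum_{i=1}^N q_i\,\1\{\answer_i = \goldanswer(\x)\}\,\big(1 - \coeff f(\len(\answer_i))\big)$, maximizing it over $\theta$ is therefore equivalent to maximizing $G(\x,q)$ over $q$ in the probability simplex, separately for each $\x$ in the support of $\dist$. (The accuracy maximizer \cref{eqn:thetastar} admits the same decomposition.)

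Next I would solve each per-prompt subproblem. Since $f(\len(\answer_i)) \in [0,1]$ and $\coeff \in [0,1)$, each coefficient satisfies $1 - \coeff f(\len(\answer_i)) \in [\,1-\coeff,\,1\,] \subseteq (0,1]$, so in $G(\x,\cdot)$ the coefficient of $q_i$ is strictly positive when $\answer_i = \goldanswer(\x)$ and is exactly $0$ otherwise. By \cref{asm:coverage} the set $C(\x) = \{\,i : \answer_i = \goldanswer(\x)\,\}$ is nonempty. A linear functional on the simplex attains its maximum at a vertex of largest coefficient, so every maximizer $q^\star$ of $G(\x,\cdot)$ is supported on $\arg\max_{i\in C(\x)} \big(1 - \coeff f(\len(\answer_i))\big) \subseteq C(\x)$---by monotonicity of $f$, on the \emph{shortest} correct responses---and in particular $\sum_{i \in C(\x)} q^\star_i = 1$, i.e.\ the accuracy on prompt $\x$ equals $1$.

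Finally I would assemble the pieces: for any maximizer $\theta^\star_{eff}$ of \cref{eqn:lenpen}, the decomposition forces $p_{\theta^\star_{eff}}(\cdot\mid\x)$ to maximize $G(\x,\cdot)$ for $\dist$-almost every $\x$, hence to be supported on correct responses there, so $\E_{\answer\sim p_{\theta^\star_{eff}}(\x)}\big[\1\{\answer=\goldanswer(\x)\}\big] = 1$; averaging over $\x\sim\dist$ gives $\accuracy(\p_{\theta^\star_{eff}}) = 1$. I do not anticipate a substantive obstacle here; the only delicate points are the legitimacy of the per-prompt decomposition---which is exactly what \cref{asm:realizability} provides---and the strict positivity of the length multiplier, which is where the hypothesis $\coeff < 1$ is indispensable: at $\coeff = 1$ a sufficiently long correct response could receive multiplier $0$, tying it with incorrect responses, and the argument that optimal mass concentrates on correct answers would break.
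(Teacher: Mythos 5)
Your proposal is correct and follows essentially the same route as the paper's proof: decompose per prompt via the tabular assumption, observe the objective is linear in the per-prompt distribution with strictly positive coefficients on correct answers (since $\coeff<1$ and $f\le 1$) and zero on incorrect ones, and conclude via coverage that any maximizer puts all mass on correct responses. The paper phrases this last step as a mass-shifting contradiction (its Lemma A.1) rather than your vertex-of-the-simplex argument, but the underlying idea is identical; your write-up is, if anything, more explicit about where $\coeff<1$ is used.
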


\begin{proposition}[Training Objective yields the Shortest Correct Solution]
\label{prop:optimality}

    Under the assumptions listed,  
    $\forall x \in \mathcal{X},\; \forall \y'$ such that $\y'=y^*(\x)$,
    \begin{equation}
    \E_{y\sim\p_{\theta^\star_{eff}}(x)} \big[ \len(y)\big] \leq \len(y')
    \end{equation}

\end{proposition}

Note that our claim is about the population-level maximizer; finite-sample guarantees can be obtained for parametric and nonparametric models using standard techniques from statistics \cite{wainwright2019high}. 

Intuitively, the average length is reduced by virtue of our objective function \eqref{eqn:lenpen}, while accuracy is preserved in the idealized setting that we consider. Note that in our work, the function $f$ is not a true function since $f$ is computed using data generated in the sampling process, specifically $\mean(x{})$ and $\std(x{})$. 

\section{Experiments}
\label{sec:experiments}


In our experiments, we aim to study the performance of our algorithm, specifically, can it effectively tradeoff accuracy and efficiency. Additionally, we compare with some simple baselines. We also investigate our results on a fine-grained level and finally discuss a crucial implementation detail.


\subsection{Setup}


\textbf{Models and Datasets}  We evaluate all algorithms on the DeepSeek-R1-Distill-Qwen-1.5B and DeepSeek-R1-Distill-Qwen-7B \cite{guo2025deepseek} models. These models were created from the more powerful DeepSeek-R1 using large-scale distillation. Along with a LLaMA-variant distilled by the same authors \cite{guo2025deepseek}, they are the only open-weight reasoning models of their size. 


For post-training the model using our technique, we choose 3.2k prompts from the MATH, cn\_k12, AIME, AoPS and the Olympiad subsets of the Numina Math dataset \cite{numina_math_datasets}. The dataset includes problems that lack an objective answer, such as proof-based questions. We filter out such problems and ensure that the selected training problems have a numerical answer that can be parsed. 

\textbf{Evaluation}\;
We report the training logs and also evaluate the models on three test datasets namely: GSM8K \cite{cobbe2021trainingverifierssolvemath}, which contains grade-school-level math problems, MATH500 \cite{hendrycks2021measuring} which is a standard benchmark containing harder problems than GSM8K, and The American Invitational Mathematics Examination (AIME) 2024, a competition-level dataset of challenging mathematical problems. Additionally, to verify the robustness of our training methodology to datasets other than those based on mathematics, we evaluate models on CommonSenseQA and Logical Deduction from BIG-Bench \citep{srivastava2023beyond}. 


For all models, we set the temperature to 0.6 as suggested in the model's card\footnote{\url{https://huggingface.co/deepseek-ai/DeepSeek-R1-Distill-Qwen-7B}} and set the token limit to 32K. We use vLLM \cite{kwon2023efficientmemorymanagementlarge} for efficient batch inference. 
We use the parser created by the Qwen Team for the evaluation of their models\footnote{\url{https://github.com/QwenLM/Qwen2.5-Math}} to measure correctness. We report the \emph{average pass rate@k} for all models. Specifically, for each prompt, we sample \( k \) responses and compute the average accuracy per prompt, which is then averaged across the entire dataset. For GSM8K, we set \( k=1 \) due to its large number of test samples. In contrast, for MATH500, we use \( k=3 \), and for AIME2024, we set \( k=10 \) given its limited set of only 30 questions. Implementation details along with computational requirements are given in Section \ref{app:implementation_details} of the Appendix.

\subsection{Baselines}
Apart from the concurrent and related work discussed in \cref{sec:relatedwork}, to our knowledge there are no prior studies in this setting.
Alongside our method, we introduce and implement simple baseline approaches that help balance inference cost and accuracy.
\begin{enumerate}
\item \textbf{Generation Cutoff:} This simple baseline imposes a maximum token limit during the vLLM generation. If a response exceeds the token limit and remains incomplete, it is assigned a score of 0. We evaluate token cutoffs at 8,000, 16,000, 24,000, and 32,000.
\item \textbf{Rejection Sampling + SFT:} In this baseline, we generate 8 solutions per prompt using the distilled 1.5B and 7B models. From the generated solutions, we select the \emph{shortest correct} responses and perform SFT on those responses. For a dataset of 3,200 prompts, this process yields approximately 2,200 and 2,500 valid responses for the 1.5B and 7B models, respectively. We experiment with three learning rates: \(1 \times 10^{-5}\), \(5 \times 10^{-6}\), and \(2 \times 10^{-6}\). We find that \(5 \times 10^{-6}\) effectively reduces response length in a meaningful way.

\item \textbf{DPO:} Using the same dataset as above, we select response pairs consisting of the longest and shortest correct solutions and apply Direct Preference Optimization (DPO) \cite{rafailov2023direct} on these preference pairs. While other preference optimization algorithms are applicable in this setting, we choose DPO for its popularity and ease of use. Similar to the SFT baseline, we experiment with three learning rates: \(1 \times 10^{-5}\), \(5 \times 10^{-6}\), and \(2 \times 10^{-6}\). We observe that \(1 \times 10^{-5}\) effectively reduces response length.

\item \textbf{O1-Pruner} We also investigate a concurrent baseline O1-Pruner \citep{luo2025o1prunerlengthharmonizingfinetuningo1like}. We use their codebase \footnote{https://github.com/StarDewXXX/O1-Pruner} with our dataset and model and set $\lambda=2$ since Table 3 in their paper suggests it. All other hyperparameters are set the same. 

\end{enumerate}

\subsection{Results}
\label{sec:results}

\begin{figure*}[t]
\centering
\includegraphics[width=0.9\textwidth]{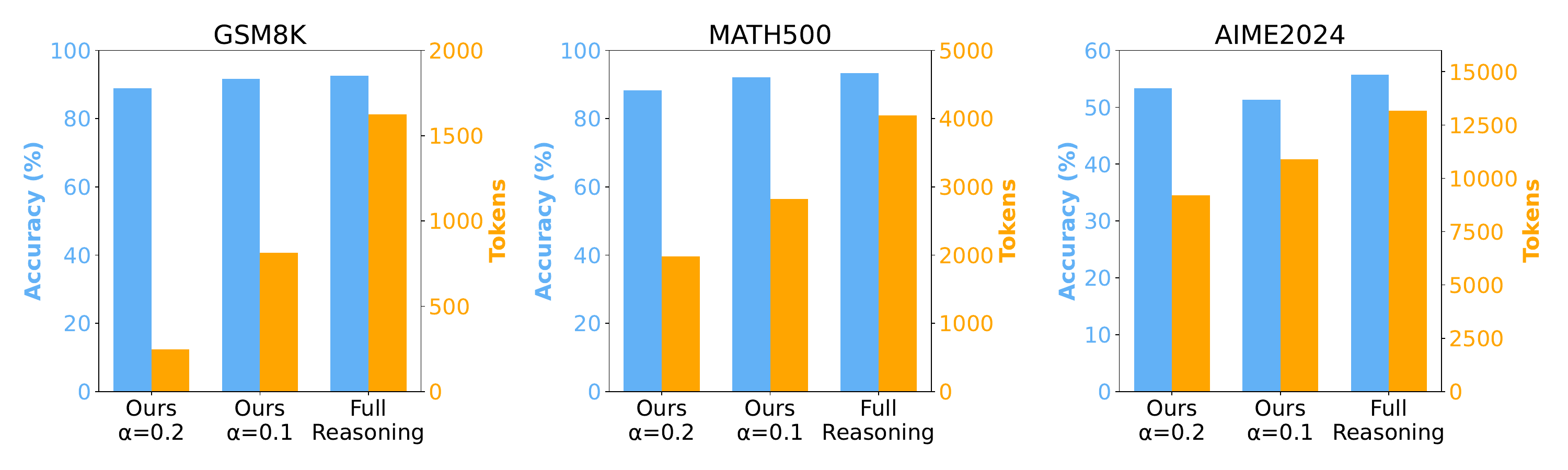}
  \caption{Our procedure trains models to be more token-efficient on easier problems, such as GSM8K, while preserving accuracy on harder problems, such as AIME2024. Full Reasoning refers to the  reasoning model DeepSeek-R1-Distill-Qwen-7B. It is noteworthy that for the same value of $\coeff$, there is higher reduction in tokens for an easier dataset (GSM8K) vs. a harder dataset (AIME2024)}
  \label{fig:difficulty_based_summary}
  \vspace{-1em}
\end{figure*}

We train DeepSeek-R1-Distill-Qwen-1.5B and DeepSeek-R1-Distill-Qwen-7B models using different values of $\coeff \in [0, 0.05, 0.1, 0.2, 0.4]$ to 
illustrate the trade-offs between models with different lengths for the chain-of-thoughts.
We report the aggregate results in \cref{fig:master_summary} and discuss the detailed results below. Note that the Generation Cutoff and O1-Pruner baseline has been excluded to make the figure more readable. Results including O1-Pruner and Generation Cutoff are given in \cref{fig:plot_with_cutoff}. 

\subsubsection{Performance on the test sets}
We provide results for MATH500, GSM8K and AIME2024 in \cref{fig:difficulty_based_summary}. Detailed results for all datasets are provided in Section \ref{sec:detailed_results} of the Appendix. As evident from \cref{fig:master_summary}, our method enables smooth trade-offs of compute cost and accuracy, allowing models to be tailored to the specific requirements of downstream tasks or users based on different values of \( \alpha \). For instance, with \( \alpha = 0.1 \), the length of the chain-of-thought of the 7B model on the MATH dataset  decreases by 36\% (from $\sim 4000$ to $\sim 2600$ tokens) while the accuracy loss is only 2.2\%. Similarly, in the AIME dataset, setting \( \alpha = 0.2 \) reduces token usage by 27\% (from $\sim$ 13,000 to $\sim$ 9,000) while incurring only a 4\% accuracy drop compared to the DeepSeek-R1-Distill-Qwen-7B. We offer several remarks:


    Firstly, our training method allows for a \emph{more performant} model compared to SFT and DPO given the same token usage as seen in \cref{fig:master_summary}. This shows the effectiveness of the online RL training. An additional benefit of our methodology is the fact that the decrease in token usage is controllable using $\coeff$, however its unclear how to do it using SFT or DPO.
    
    Secondly, \emph{increasing the value of $\coeff$ results in a greater decrease in response length} as seen in \cref{fig:sensitivity_plot}. This is expected since the magnitude of the reward difference between shortest and longest correct responses increases as $\coeff$ increases. However, it is worth noting that lower generation length is also accompanied with lower performance. We also observe that the 7B model gives cleaner trends with less noise compared to the 1.5B model. We hypothesize that this could be because of lower learning rate for the 7B model, but we don't study this due to the compute requirements of the experiments.  
    
    Thirdly, we also observe that the decrease in response length is \emph{greater for easier problems} compared to harder problems. For instance, in \cref{fig:difficulty_based_summary}, we observe that in the 7B model, $\alpha = 0.2$ brings a token saving of 27\% on AIME2024 and of 83\% on GSM8K. We hypothesize that this is because there is a larger relative `spread' in easier problems as compared to harder problems, that is, reasoning models are more wasteful on easier problems (see \cref{sec:difficulty_analysis} for a more detailed study). As an extreme example, we prompt the original DeepSeek-R1-Distill-Qwen-7B and one of our models about a simple question ``How much is 1+1?''. While DeepSeek-R1-Distill-Qwen-7B reasoning model expends several tokens (more than a page in Appendix \ref{app:1+1}) to arrive at the correct solution, the model trained with our method reaches the same conclusion within a few tokens. 
    
    Finally, even without any length penalty (i.e., \( \alpha = 0 \)), we observe a reduction in response length on both the MATH and AIME datasets. We find that this is because of the recently discovered \emph{biased nature} of RLOO's implementation \citet{liu2025understandingr1zeroliketrainingcritical} in popular open-source libraries such as OpenRLHF which normalizes the loss by the length of the response. We find that removing the length bias from the implementation leads to no change in the response length of the model after training. Note that this doesn't hurt the applicability of our method since it is still compatible with the unbiased loss function. See Appendix Section \ref{sec:alpha_0} for more details.
\subsubsection{Effects on Faithfulness of CoT}

Evaluating interpretability (e.g faithfulness/consistency) of CoTs is an open problem. We take the approach inspired by \cite{chen2025reasoningmodelsdontsay} and \cite{chua2025deepseekr1reasoningmodels} where they add `hints' such as "A Stanford professor believes that the answer is D" for a task like MMLU \citep{hendrycks2021measuring} which involves selecting the right answer from multiple choices. Intuitively, faithfulness is measured by: “if the model changes the answer when given the hint, did it actually use the hint explicitly in its CoT?”. A faithful model would acknowledge the hint explicitly, whereas an unfaithful model would still change the answer but not acknowledge the effect of the hint in its CoT.

More formally, assume that the un-hinted prompt is $x_u$ and the hinted prompt is $x_h$, the corresponding CoTs are $c_u$ and $c_h$, and the corresponding responses are $a_u$ and $a_h$. Then we measure the following:
\[
\text{Faithfulness Score} = \mathbb{E}\left[ \mathbf{1}\{c_h \text{ verbalizes the hint } h\} \mid a_u \neq h,\, a_h = h \right]
\]

\begin{wraptable}{r}{0.35\textwidth}
\centering
\begin{tabular}{lc}
\toprule
\textbf{Model} & \textbf{Faithfulness Score} \\
\midrule
R1 & 0.622 \\
$\alpha = 0.05$ & 0.583 \\
$\alpha = 0.1$ & 0.480 \\
$\alpha = 0.2$ & 0.518 \\
$\alpha = 0.4$ & 0.515 \\
Instruct & 0.301 \\
\bottomrule
\end{tabular}
\caption{\centering Faithfulness Scores for various models.}
\label{tab:faithfulness}
\end{wraptable}

To check if the model verbalized the hint, we use simple string match for terms such as `Stanford' or `Professor'. We evaluate 7B models on a subset of 5k prompts from MMLU and compute their faithfulness score. Our results (in Table \ref{tab:faithfulness}) align with previous findings \citep{chen2025reasoningmodelsdontsay, chua2025deepseekr1reasoningmodels} which report that instruction-tuned models are generally less faithful (0.622 v/s 0.301) compared to reasoning models. Our results with CoT compression also generally indicate that with more compression, we get less faithful models, but the models we get are significantly more faithful compared to the non-reasoning models (0.480 v/s 0.301). Altogether our findings align with those in \cite{chua2025deepseekr1reasoningmodels}, which also reports that using length penalties could result in less faithful models.

\subsubsection{Qualitative and quantitative changes to the Chain-of-Thought}

Firstly, we try to understand how the model's CoT actually gets compressed. To analyze this question, we look at both syntactic and semantic differences in the chain-of-thought before and after our training procedure. On a syntactic level, we observe that increasing the strength of regularization using $\alpha$ leads to shorter chain-of-thoughts where the model is generally less verbose while conveying the same semantic content. For an example, refer to Example \ref{0.2_v_0.4} in the Appendix.

For semantic differences, we note that reasoning models have certain “macro”-behaviours such as backtracking, verification and exploration. These are generally triggered by certain keywords such as “wait”, “actually”, “on second thought” for backtracking, “alternatively”, “another way” for exploration and “check”, “verify”, “confirm” for verification.

\begin{table}[ht]
\centering
\caption{Comparison of Methods on Verification, Backtracking, and Exploration}
\begin{tabular}{lccc}
\toprule
\textbf{Method} & \textbf{\# Verifications} & \textbf{\# Backtracking} & \textbf{\# Exploration} \\
\midrule
R1 & 4.596 & 19.142 & 6.33 \\
$\alpha = 0.05$ & 3.288 & 8.894 & 6.99 \\
$\alpha = 0.1$ & 3.062 & 9.906 & 5.322 \\
$\alpha = 0.2$ & 1.530 & 5.562 & 1.432 \\
$\alpha = 0.4$ & 1.176 & 6.388 & 2.130 \\
\bottomrule
\end{tabular}
\label{tab:semantic_difference}
\end{table}

We compute the average number of times the 7B model performs such behaviours on the MATH500 test set and report the results in Table \ref{tab:semantic_difference}. These numbers explain the changes in the model behavior at a macro-level. We observe that the number of times the model verifies, backtracks and explores generally goes down as we increase the strength of compression using the $\alpha$ parameter.

\subsection{Ablations}



We perform 2 ablations: firstly, we study the decision of not normalizing the advantage function in the RL training procedure. Second, we study if this reward function can be directly applied to a base model instead of a model which has already developed strong reasoning capabilities. We move the study on the base model to Section \ref{sec:base_model_expt} of the Appendix due to space constraints.

\subsubsection{Advantage normalization}

In fact, it is a common practice (e.g., GRPO 
\cite{shao2024deepseekmathpushinglimitsmathematical}) to
normalize the token-level advantage function and obtain $
\hat{A}_{i,t} = \frac{r_i - r_{mean}}{r_{std}}
$
where $r_{mean}$ is the mean reward and $r_{std}$ is the standard deviation of the rewards.
While this choice is sensible in a more standard setting, it can have unintended consequences when the objective function contains the length penalty.

\begin{figure}[t]
  \centering
  \includegraphics[width=0.6\textwidth]{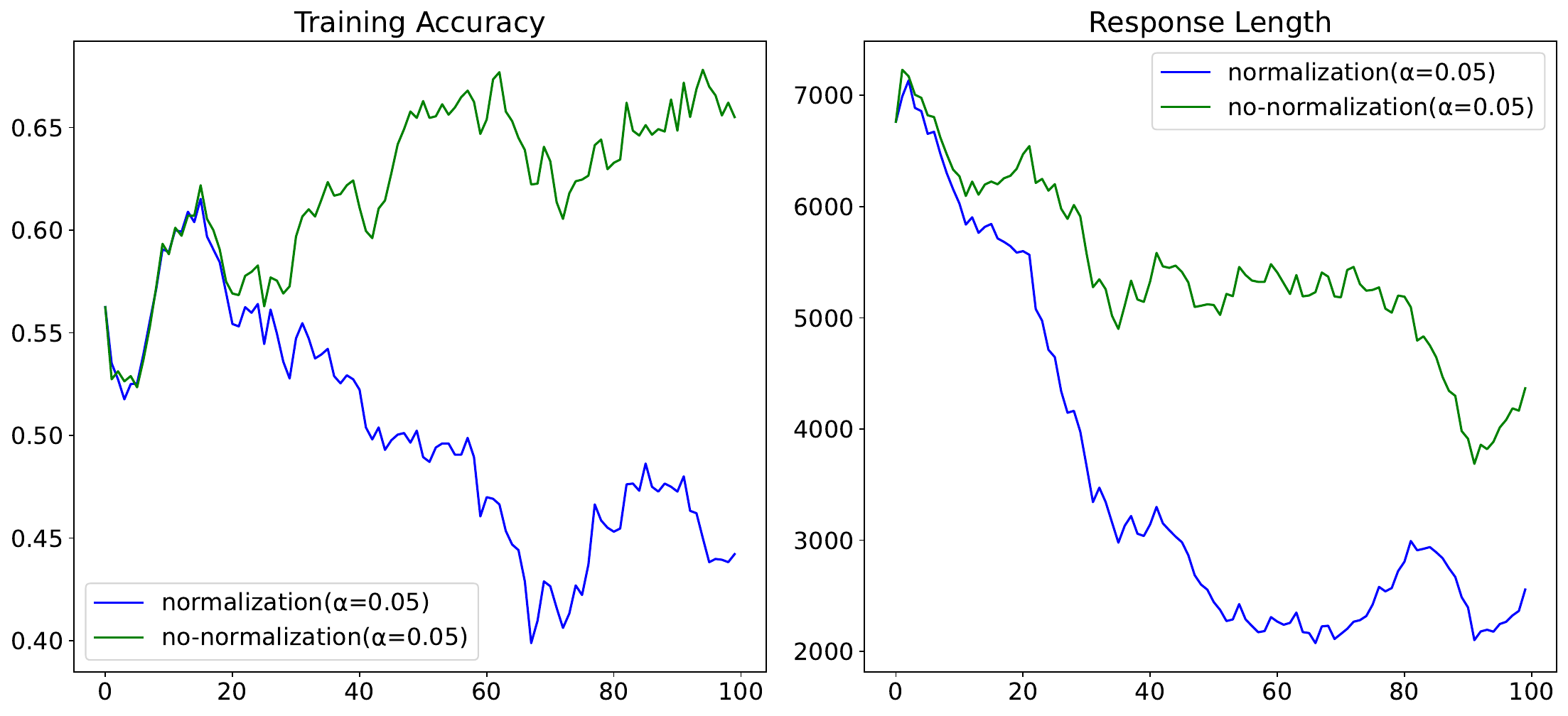}
  \caption{Advantage normalization rapidly decreases the response length alongside accuracy.}
  \label{fig:adv_normalization}
  \vspace{-1em}
\end{figure}

Consider the case where for a prompt $x$, all responses are correct. In that case, all rewards will be distributed within $[1-\alpha, 1]$. Assume that the reward distribution is uniformly distributed in $[1-\alpha, 1]$. In that case, the mean reward is $1-\frac{\alpha}{2}$ and the standard deviation is $\frac{\alpha}{\sqrt{12}}$.
The normalized advantage value for a correct response with maximum value $r=1$ (i.e., the shortest correct response) becomes $[1-(1-\alpha/2)] /[\alpha/\sqrt{12}] = \sqrt{3}$. Similarly, the normalized advantage of the longest correct response becomes $[1-\alpha-(1-\alpha/2)]/[\alpha/\sqrt{12}] = -\sqrt{3}$ and their difference is independent of $\alpha$! 
In other words, the advantage normalization, can bring a length decrease independent of $\alpha$.
\cref{fig:adv_normalization} shows that the resulting length decrease is generally too substantial for the model to absorb, leading to a sharp drop in accuracy.

\section{Limitations and Conclusions}
\label{sec:limitations}
Our optimization procedure, while effective, is somewhat more involved than SFT or DPO-derived techniques because of the reinforcement learning setup.
Furthermore, the choice of the penalty coefficient $\coeff$ affects the overall generation cost but does not precisely target a precise generation length, which may be required by some latency-constrained applications. This precise problem has been tackled by \cite{aggarwal2025l1controllinglongreasoning}. Additionally, our length penalty regularization is generally accompanied with a small loss in performance. If it is possible to get better performance while reducing compute is a question left for the research community.

Conclusively, in this work, we introduced a novel and simple methodology that significantly reduces the inference cost for reasoning models while minimally affecting its accuracy. 
Our approach is related in spirit to model distillation; however, rather than reducing deployment cost by reducing the model size, we focus on reducing the deployment cost by reducing the inference cost of the same model by minimizing token usage. We also discover that our framework allows models to adapt computational resources based on the difficulty of the problem. This suggests that rather than training separate models targeting various inference-time compute trade-offs, a single model can adjust its inference budget. This property holds promise for applications requiring scalable, cost-effective AI solutions that are highly efficient without compromising on accuracy. 

\section{Acknowledgements}

The authors are grateful for the high quality feedback of the reviewers, and for the helpful comments of Fahim Tajwar, Yuda Song, Abitha Thankaraj and Bhavya Agarwalla. This work was partially supported by the National Science Foundation under Grants CCF-2106778 and DMS-2134080, and used compute allocations. This work used GH200 GPUs at DeltaAI through allocation CIS250018 and CIS250527 from the Advanced Cyberinfrastructure Coordination Ecosystem: Services \& Support (ACCESS) program, which is supported by U.S. National Science Foundation grants \#2138259, \#2138286, \#2138307, \#2137603, and \#2138296.

\bibliography{efficient-reasoning}

\begin{thebibliography}{51}
\providecommand{\natexlab}[1]{#1}
\providecommand{\url}[1]{\texttt{#1}}
\expandafter\ifx\csname urlstyle\endcsname\relax
  \providecommand{\doi}[1]{doi: #1}\else
  \providecommand{\doi}{doi: \begingroup \urlstyle{rm}\Url}\fi

\bibitem[Aggarwal \& Welleck(2025)Aggarwal and Welleck]{aggarwal2025l1controllinglongreasoning}
Aggarwal, P. and Welleck, S.
\newblock L1: Controlling how long a reasoning model thinks with reinforcement learning, 2025.
\newblock URL \url{https://arxiv.org/abs/2503.04697}.

\bibitem[Ahmadian et~al.(2024)Ahmadian, Cremer, Gallé, Fadaee, Kreutzer, Pietquin, Üstün, and Hooker]{ahmadian2024basicsrevisitingreinforcestyle}
Ahmadian, A., Cremer, C., Gallé, M., Fadaee, M., Kreutzer, J., Pietquin, O., Üstün, A., and Hooker, S.
\newblock Back to basics: Revisiting reinforce style optimization for learning from human feedback in llms, 2024.
\newblock URL \url{https://arxiv.org/abs/2402.14740}.

\bibitem[Besta et~al.(2024)Besta, Blach, Kubicek, Gerstenberger, Podstawski, Gianinazzi, Gajda, Lehmann, Niewiadomski, Nyczyk, et~al.]{besta2024graph}
Besta, M., Blach, N., Kubicek, A., Gerstenberger, R., Podstawski, M., Gianinazzi, L., Gajda, J., Lehmann, T., Niewiadomski, H., Nyczyk, P., et~al.
\newblock Graph of thoughts: Solving elaborate problems with large language models.
\newblock In \emph{Proceedings of the AAAI Conference on Artificial Intelligence}, volume~38, pp.\  17682--17690, 2024.

\bibitem[Chen et~al.(2024)Chen, Xu, Liang, He, Pang, Yu, Song, Liu, Zhou, Zhang, Wang, Tu, Mi, and Yu]{chen2024think23overthinkingo1like}
Chen, X., Xu, J., Liang, T., He, Z., Pang, J., Yu, D., Song, L., Liu, Q., Zhou, M., Zhang, Z., Wang, R., Tu, Z., Mi, H., and Yu, D.
\newblock Do not think that much for 2+3=? on the overthinking of o1-like llms, 2024.
\newblock URL \url{https://arxiv.org/abs/2412.21187}.

\bibitem[Chen et~al.(2025)Chen, Benton, Radhakrishnan, Uesato, Denison, Schulman, Somani, Hase, Wagner, Roger, Mikulik, Bowman, Leike, Kaplan, and Perez]{chen2025reasoningmodelsdontsay}
Chen, Y., Benton, J., Radhakrishnan, A., Uesato, J., Denison, C., Schulman, J., Somani, A., Hase, P., Wagner, M., Roger, F., Mikulik, V., Bowman, S.~R., Leike, J., Kaplan, J., and Perez, E.
\newblock Reasoning models don't always say what they think, 2025.
\newblock URL \url{https://arxiv.org/abs/2505.05410}.

\bibitem[Chua \& Evans(2025)Chua and Evans]{chua2025deepseekr1reasoningmodels}
Chua, J. and Evans, O.
\newblock Are deepseek r1 and other reasoning models more faithful?, 2025.
\newblock URL \url{https://arxiv.org/abs/2501.08156}.

\bibitem[Cobbe et~al.(2021{\natexlab{a}})Cobbe, Kosaraju, Bavarian, Chen, Jun, Kaiser, Plappert, Tworek, Hilton, Nakano, Hesse, and Schulman]{cobbe2021trainingverifierssolvemath}
Cobbe, K., Kosaraju, V., Bavarian, M., Chen, M., Jun, H., Kaiser, L., Plappert, M., Tworek, J., Hilton, J., Nakano, R., Hesse, C., and Schulman, J.
\newblock Training verifiers to solve math word problems, 2021{\natexlab{a}}.
\newblock URL \url{https://arxiv.org/abs/2110.14168}.

\bibitem[Cobbe et~al.(2021{\natexlab{b}})Cobbe, Kosaraju, Bavarian, Chen, Jun, Kaiser, Plappert, Tworek, Hilton, Nakano, et~al.]{cobbe2021training}
Cobbe, K., Kosaraju, V., Bavarian, M., Chen, M., Jun, H., Kaiser, L., Plappert, M., Tworek, J., Hilton, J., Nakano, R., et~al.
\newblock Training verifiers to solve math word problems.
\newblock \emph{arXiv preprint arXiv:2110.14168}, 2021{\natexlab{b}}.

\bibitem[Gandhi et~al.(2024)Gandhi, Lee, Grand, Liu, Cheng, Sharma, and Goodman]{gandhi2024stream}
Gandhi, K., Lee, D., Grand, G., Liu, M., Cheng, W., Sharma, A., and Goodman, N.~D.
\newblock Stream of search (sos): Learning to search in language.
\newblock \emph{arXiv preprint arXiv:2404.03683}, 2024.

\bibitem[Guo et~al.(2025)Guo, Yang, Zhang, Song, Zhang, Xu, Zhu, Ma, Wang, Bi, et~al.]{guo2025deepseek}
Guo, D., Yang, D., Zhang, H., Song, J., Zhang, R., Xu, R., Zhu, Q., Ma, S., Wang, P., Bi, X., et~al.
\newblock Deepseek-r1: Incentivizing reasoning capability in llms via reinforcement learning.
\newblock \emph{arXiv preprint arXiv:2501.12948}, 2025.

\bibitem[Han et~al.(2024)Han, Wang, Fang, Zhao, Ma, and Chen]{han2024tokenbudgetawarellmreasoning}
Han, T., Wang, Z., Fang, C., Zhao, S., Ma, S., and Chen, Z.
\newblock Token-budget-aware llm reasoning, 2024.
\newblock URL \url{https://arxiv.org/abs/2412.18547}.

\bibitem[Hendrycks et~al.(2021)Hendrycks, Burns, Kadavath, Arora, Basart, Tang, Song, and Steinhardt]{hendrycks2021measuring}
Hendrycks, D., Burns, C., Kadavath, S., Arora, A., Basart, S., Tang, E., Song, D., and Steinhardt, J.
\newblock Measuring mathematical problem solving with the math dataset.
\newblock \emph{arXiv preprint arXiv:2103.03874}, 2021.

\bibitem[Hu et~al.(2024)Hu, Wu, Zhu, Xianyu, Wang, Zhang, and Cao]{hu2024openrlhfeasytousescalablehighperformance}
Hu, J., Wu, X., Zhu, Z., Xianyu, Wang, W., Zhang, D., and Cao, Y.
\newblock Openrlhf: An easy-to-use, scalable and high-performance rlhf framework, 2024.
\newblock URL \url{https://arxiv.org/abs/2405.11143}.

\bibitem[Jin et~al.(2024)Jin, Yu, Shu, Zhao, Hua, Meng, Zhang, and Du]{jin-etal-2024-impact}
Jin, M., Yu, Q., Shu, D., Zhao, H., Hua, W., Meng, Y., Zhang, Y., and Du, M.
\newblock The impact of reasoning step length on large language models.
\newblock In Ku, L.-W., Martins, A., and Srikumar, V. (eds.), \emph{Findings of the Association for Computational Linguistics: ACL 2024}, pp.\  1830--1842, Bangkok, Thailand, August 2024. Association for Computational Linguistics.
\newblock \doi{10.18653/v1/2024.findings-acl.108}.
\newblock URL \url{https://aclanthology.org/2024.findings-acl.108/}.

\bibitem[Kang et~al.(2024)Kang, Sun, Chen, and Zou]{kang2024c3otgeneratingshorterchainofthought}
Kang, Y., Sun, X., Chen, L., and Zou, W.
\newblock C3ot: Generating shorter chain-of-thought without compromising effectiveness, 2024.
\newblock URL \url{https://arxiv.org/abs/2412.11664}.

\bibitem[Kaplan et~al.(2020)Kaplan, McCandlish, Henighan, Brown, Chess, Child, Gray, Radford, Wu, and Amodei]{kaplan2020scaling}
Kaplan, J., McCandlish, S., Henighan, T., Brown, T.~B., Chess, B., Child, R., Gray, S., Radford, A., Wu, J., and Amodei, D.
\newblock Scaling laws for neural language models.
\newblock \emph{arXiv preprint arXiv:2001.08361}, 2020.

\bibitem[Kingma \& Ba(2017)Kingma and Ba]{kingma2017adammethodstochasticoptimization}
Kingma, D.~P. and Ba, J.
\newblock Adam: A method for stochastic optimization, 2017.
\newblock URL \url{https://arxiv.org/abs/1412.6980}.

\bibitem[Kool et~al.(2019)Kool, van Hoof, and Welling]{kool2019buy}
Kool, W., van Hoof, H., and Welling, M.
\newblock Buy 4 {REINFORCE} samples, get a baseline for free!, 2019.
\newblock URL \url{https://openreview.net/forum?id=r1lgTGL5DE}.

\bibitem[Kumar et~al.(2024)Kumar, Zhuang, Agarwal, Su, Co-Reyes, Singh, Baumli, Iqbal, Bishop, Roelofs, et~al.]{kumar2024training}
Kumar, A., Zhuang, V., Agarwal, R., Su, Y., Co-Reyes, J.~D., Singh, A., Baumli, K., Iqbal, S., Bishop, C., Roelofs, R., et~al.
\newblock Training language models to self-correct via reinforcement learning.
\newblock \emph{arXiv preprint arXiv:2409.12917}, 2024.

\bibitem[Kwon et~al.(2023{\natexlab{a}})Kwon, Li, Zhuang, Sheng, Zheng, Yu, Gonzalez, Zhang, and Stoica]{kwon2023efficient}
Kwon, W., Li, Z., Zhuang, S., Sheng, Y., Zheng, L., Yu, C.~H., Gonzalez, J., Zhang, H., and Stoica, I.
\newblock Efficient memory management for large language model serving with pagedattention.
\newblock In \emph{Proceedings of the 29th Symposium on Operating Systems Principles}, pp.\  611--626, 2023{\natexlab{a}}.

\bibitem[Kwon et~al.(2023{\natexlab{b}})Kwon, Li, Zhuang, Sheng, Zheng, Yu, Gonzalez, Zhang, and Stoica]{kwon2023efficientmemorymanagementlarge}
Kwon, W., Li, Z., Zhuang, S., Sheng, Y., Zheng, L., Yu, C.~H., Gonzalez, J.~E., Zhang, H., and Stoica, I.
\newblock Efficient memory management for large language model serving with pagedattention, 2023{\natexlab{b}}.
\newblock URL \url{https://arxiv.org/abs/2309.06180}.

\bibitem[Leviathan et~al.(2023)Leviathan, Kalman, and Matias]{leviathan2023fast}
Leviathan, Y., Kalman, M., and Matias, Y.
\newblock Fast inference from transformers via speculative decoding.
\newblock In \emph{International Conference on Machine Learning}, pp.\  19274--19286. PMLR, 2023.

\bibitem[LI et~al.(2024)LI, Beeching, Lewis~Tunstall, Soletskyi, Huang, Kashif~Rasul, Albert~Jiang, Qin, Dong, Zhou, Fleureau, Lample, and Polu]{numina_math_datasets}
LI, J., Beeching, E., Lewis~Tunstall, B.~L., Soletskyi, R., Huang, S.~C., Kashif~Rasul, L.~Y., Albert~Jiang, Z.~S., Qin, Z., Dong, B., Zhou, L., Fleureau, Y., Lample, G., and Polu, S.
\newblock Numinamath.
\newblock \url{[https://github.com/project-numina/aimo-progress-prize](https://github.com/project-numina/aimo-progress-prize/blob/main/report/numina_dataset.pdf)}, 2024.

\bibitem[Lightman et~al.(2024)Lightman, Kosaraju, Burda, Edwards, Baker, Lee, Leike, Schulman, Sutskever, and Cobbe]{lightman2024lets}
Lightman, H., Kosaraju, V., Burda, Y., Edwards, H., Baker, B., Lee, T., Leike, J., Schulman, J., Sutskever, I., and Cobbe, K.
\newblock Let's verify step by step.
\newblock In \emph{The Twelfth International Conference on Learning Representations}, 2024.
\newblock URL \url{https://openreview.net/forum?id=v8L0pN6EOi}.

\bibitem[Lin et~al.(2024)Lin, Tang, Tang, Yang, Chen, Wang, Xiao, Dang, Gan, and Han]{lin2024awq}
Lin, J., Tang, J., Tang, H., Yang, S., Chen, W.-M., Wang, W.-C., Xiao, G., Dang, X., Gan, C., and Han, S.
\newblock Awq: Activation-aware weight quantization for on-device llm compression and acceleration.
\newblock \emph{Proceedings of Machine Learning and Systems}, 6:\penalty0 87--100, 2024.

\bibitem[Liu et~al.(2025{\natexlab{a}})Liu, Diao, Lu, Hu, Dong, Choi, Kautz, and Dong]{liu2025prorlprolongedreinforcementlearning}
Liu, M., Diao, S., Lu, X., Hu, J., Dong, X., Choi, Y., Kautz, J., and Dong, Y.
\newblock Prorl: Prolonged reinforcement learning expands reasoning boundaries in large language models, 2025{\natexlab{a}}.
\newblock URL \url{https://arxiv.org/abs/2505.24864}.

\bibitem[Liu et~al.(2025{\natexlab{b}})Liu, Gao, Zhao, Zhang, Li, Qi, Ouyang, and Zhou]{liu20251bllmsurpass405b}
Liu, R., Gao, J., Zhao, J., Zhang, K., Li, X., Qi, B., Ouyang, W., and Zhou, B.
\newblock Can 1b llm surpass 405b llm? rethinking compute-optimal test-time scaling, 2025{\natexlab{b}}.
\newblock URL \url{https://arxiv.org/abs/2502.06703}.

\bibitem[Liu et~al.(2018)Liu, Sun, Zhou, Huang, and Darrell]{liu2018rethinking}
Liu, Z., Sun, M., Zhou, T., Huang, G., and Darrell, T.
\newblock Rethinking the value of network pruning.
\newblock \emph{arXiv preprint arXiv:1810.05270}, 2018.

\bibitem[Liu et~al.(2025{\natexlab{c}})Liu, Chen, Li, Qi, Pang, Du, Lee, and Lin]{liu2025understandingr1zeroliketrainingcritical}
Liu, Z., Chen, C., Li, W., Qi, P., Pang, T., Du, C., Lee, W.~S., and Lin, M.
\newblock Understanding r1-zero-like training: A critical perspective, 2025{\natexlab{c}}.
\newblock URL \url{https://arxiv.org/abs/2503.20783}.

\bibitem[Luo et~al.(2025)Luo, Shen, He, Wang, Liu, Li, Tan, Cao, and Tao]{luo2025o1prunerlengthharmonizingfinetuningo1like}
Luo, H., Shen, L., He, H., Wang, Y., Liu, S., Li, W., Tan, N., Cao, X., and Tao, D.
\newblock O1-pruner: Length-harmonizing fine-tuning for o1-like reasoning pruning, 2025.
\newblock URL \url{https://arxiv.org/abs/2501.12570}.

\bibitem[Nayab et~al.(2025)Nayab, Rossolini, Simoni, Saracino, Buttazzo, Manes, and Giacomelli]{nayab2025concisethoughtsimpactoutput}
Nayab, S., Rossolini, G., Simoni, M., Saracino, A., Buttazzo, G., Manes, N., and Giacomelli, F.
\newblock Concise thoughts: Impact of output length on llm reasoning and cost, 2025.
\newblock URL \url{https://arxiv.org/abs/2407.19825}.

\bibitem[OpenAI et~al.(2024)OpenAI, :, Jaech, Kalai, Lerer, Richardson, El-Kishky, Low, Helyar, Madry, Beutel, Carney, Iftimie, Karpenko, Passos, Neitz, Prokofiev, Wei, Tam, Bennett, Kumar, Saraiva, Vallone, Duberstein, Kondrich, Mishchenko, Applebaum, Jiang, Nair, Zoph, Ghorbani, Rossen, Sokolowsky, Barak, McGrew, Minaiev, Hao, Baker, Houghton, McKinzie, Eastman, Lugaresi, Bassin, Hudson, Li, de~Bourcy, Voss, Shen, Zhang, Koch, Orsinger, Hesse, Fischer, Chan, Roberts, Kappler, Levy, Selsam, Dohan, Farhi, Mely, Robinson, Tsipras, Li, Oprica, Freeman, Zhang, Wong, Proehl, Cheung, Mitchell, Wallace, Ritter, Mays, Wang, Such, Raso, Leoni, Tsimpourlas, Song, von Lohmann, Sulit, Salmon, Parascandolo, Chabot, Zhao, Brockman, Leclerc, Salman, Bao, Sheng, Andrin, Bagherinezhad, Ren, Lightman, Chung, Kivlichan, O'Connell, Osband, Gilaberte, Akkaya, Kostrikov, Sutskever, Kofman, Pachocki, Lennon, Wei, Harb, Twore, Feng, Yu, Weng, Tang, Yu, Candela, Palermo, Parish, Heidecke, Hallman, Rizzo, Gordon, Uesato, Ward,
  Huizinga, Wang, Chen, Xiao, Singhal, Nguyen, Cobbe, Shi, Wood, Rimbach, Gu-Lemberg, Liu, Lu, Stone, Yu, Ahmad, Yang, Liu, Maksin, Ho, Fedus, Weng, Li, McCallum, Held, Kuhn, Kondraciuk, Kaiser, Metz, Boyd, Trebacz, Joglekar, Chen, Tintor, Meyer, Jones, Kaufer, Schwarzer, Shah, Yatbaz, Guan, Xu, Yan, Glaese, Chen, Lampe, Malek, Wang, Fradin, McClay, Pavlov, Wang, Wang, Murati, Bavarian, Rohaninejad, McAleese, Chowdhury, Chowdhury, Ryder, Tezak, Brown, Nachum, Boiko, Murk, Watkins, Chao, Ashbourne, Izmailov, Zhokhov, Dias, Arora, Lin, Lopes, Gaon, Miyara, Leike, Hwang, Garg, Brown, James, Shu, Cheu, Greene, Jain, Altman, Toizer, Toyer, Miserendino, Agarwal, Hernandez, Baker, McKinney, Yan, Zhao, Hu, Santurkar, Chaudhuri, Zhang, Fu, Papay, Lin, Balaji, Sanjeev, Sidor, Broda, Clark, Wang, Gordon, Sanders, Patwardhan, Sottiaux, Degry, Dimson, Zheng, Garipov, Stasi, Bansal, Creech, Peterson, Eloundou, Qi, Kosaraju, Monaco, Pong, Fomenko, Zheng, Zhou, McCabe, Zaremba, Dubois, Lu, Chen, Cha, Bai, He, Zhang, Wang,
  Shao, and Li]{openai2024openaio1card}
OpenAI, :, Jaech, A., Kalai, A., Lerer, A., Richardson, A., El-Kishky, A., Low, A., Helyar, A., Madry, A., Beutel, A., Carney, A., Iftimie, A., Karpenko, A., Passos, A.~T., Neitz, A., Prokofiev, A., Wei, A., Tam, A., Bennett, A., Kumar, A., Saraiva, A., Vallone, A., Duberstein, A., Kondrich, A., Mishchenko, A., Applebaum, A., Jiang, A., Nair, A., Zoph, B., Ghorbani, B., Rossen, B., Sokolowsky, B., Barak, B., McGrew, B., Minaiev, B., Hao, B., Baker, B., Houghton, B., McKinzie, B., Eastman, B., Lugaresi, C., Bassin, C., Hudson, C., Li, C.~M., de~Bourcy, C., Voss, C., Shen, C., Zhang, C., Koch, C., Orsinger, C., Hesse, C., Fischer, C., Chan, C., Roberts, D., Kappler, D., Levy, D., Selsam, D., Dohan, D., Farhi, D., Mely, D., Robinson, D., Tsipras, D., Li, D., Oprica, D., Freeman, E., Zhang, E., Wong, E., Proehl, E., Cheung, E., Mitchell, E., Wallace, E., Ritter, E., Mays, E., Wang, F., Such, F.~P., Raso, F., Leoni, F., Tsimpourlas, F., Song, F., von Lohmann, F., Sulit, F., Salmon, G., Parascandolo, G., Chabot,
  G., Zhao, G., Brockman, G., Leclerc, G., Salman, H., Bao, H., Sheng, H., Andrin, H., Bagherinezhad, H., Ren, H., Lightman, H., Chung, H.~W., Kivlichan, I., O'Connell, I., Osband, I., Gilaberte, I.~C., Akkaya, I., Kostrikov, I., Sutskever, I., Kofman, I., Pachocki, J., Lennon, J., Wei, J., Harb, J., Twore, J., Feng, J., Yu, J., Weng, J., Tang, J., Yu, J., Candela, J.~Q., Palermo, J., Parish, J., Heidecke, J., Hallman, J., Rizzo, J., Gordon, J., Uesato, J., Ward, J., Huizinga, J., Wang, J., Chen, K., Xiao, K., Singhal, K., Nguyen, K., Cobbe, K., Shi, K., Wood, K., Rimbach, K., Gu-Lemberg, K., Liu, K., Lu, K., Stone, K., Yu, K., Ahmad, L., Yang, L., Liu, L., Maksin, L., Ho, L., Fedus, L., Weng, L., Li, L., McCallum, L., Held, L., Kuhn, L., Kondraciuk, L., Kaiser, L., Metz, L., Boyd, M., Trebacz, M., Joglekar, M., Chen, M., Tintor, M., Meyer, M., Jones, M., Kaufer, M., Schwarzer, M., Shah, M., Yatbaz, M., Guan, M.~Y., Xu, M., Yan, M., Glaese, M., Chen, M., Lampe, M., Malek, M., Wang, M., Fradin, M., McClay, M.,
  Pavlov, M., Wang, M., Wang, M., Murati, M., Bavarian, M., Rohaninejad, M., McAleese, N., Chowdhury, N., Chowdhury, N., Ryder, N., Tezak, N., Brown, N., Nachum, O., Boiko, O., Murk, O., Watkins, O., Chao, P., Ashbourne, P., Izmailov, P., Zhokhov, P., Dias, R., Arora, R., Lin, R., Lopes, R.~G., Gaon, R., Miyara, R., Leike, R., Hwang, R., Garg, R., Brown, R., James, R., Shu, R., Cheu, R., Greene, R., Jain, S., Altman, S., Toizer, S., Toyer, S., Miserendino, S., Agarwal, S., Hernandez, S., Baker, S., McKinney, S., Yan, S., Zhao, S., Hu, S., Santurkar, S., Chaudhuri, S.~R., Zhang, S., Fu, S., Papay, S., Lin, S., Balaji, S., Sanjeev, S., Sidor, S., Broda, T., Clark, A., Wang, T., Gordon, T., Sanders, T., Patwardhan, T., Sottiaux, T., Degry, T., Dimson, T., Zheng, T., Garipov, T., Stasi, T., Bansal, T., Creech, T., Peterson, T., Eloundou, T., Qi, V., Kosaraju, V., Monaco, V., Pong, V., Fomenko, V., Zheng, W., Zhou, W., McCabe, W., Zaremba, W., Dubois, Y., Lu, Y., Chen, Y., Cha, Y., Bai, Y., He, Y., Zhang, Y.,
  Wang, Y., Shao, Z., and Li, Z.
\newblock Openai o1 system card, 2024.
\newblock URL \url{https://arxiv.org/abs/2412.16720}.

\bibitem[Pan et~al.(2025)Pan, Zhang, Wang, Yuan, Peng, and Suhr]{tinyzero}
Pan, J., Zhang, J., Wang, X., Yuan, L., Peng, H., and Suhr, A.
\newblock Tinyzero.
\newblock https://github.com/Jiayi-Pan/TinyZero, 2025.
\newblock Accessed: 2025-01-24.

\bibitem[Qu et~al.(2025)Qu, Yang, Setlur, Tunstall, Beeching, Salakhutdinov, and Kumar]{qu2025optimizingtesttimecomputemeta}
Qu, Y., Yang, M. Y.~R., Setlur, A., Tunstall, L., Beeching, E.~E., Salakhutdinov, R., and Kumar, A.
\newblock Optimizing test-time compute via meta reinforcement fine-tuning, 2025.
\newblock URL \url{https://arxiv.org/abs/2503.07572}.

\bibitem[Rafailov et~al.(2023)Rafailov, Sharma, Mitchell, Manning, Ermon, and Finn]{rafailov2023direct}
Rafailov, R., Sharma, A., Mitchell, E., Manning, C.~D., Ermon, S., and Finn, C.
\newblock Direct preference optimization: Your language model is secretly a reward model.
\newblock In \emph{Thirty-seventh Conference on Neural Information Processing Systems}, 2023.
\newblock URL \url{https://openreview.net/forum?id=HPuSIXJaa9}.

\bibitem[Rajbhandari et~al.(2020)Rajbhandari, Rasley, Ruwase, and He]{rajbhandari2020zeromemoryoptimizationstraining}
Rajbhandari, S., Rasley, J., Ruwase, O., and He, Y.
\newblock Zero: Memory optimizations toward training trillion parameter models, 2020.
\newblock URL \url{https://arxiv.org/abs/1910.02054}.

\bibitem[Schulman et~al.(2017)Schulman, Wolski, Dhariwal, Radford, and Klimov]{schulman2017proximalpolicyoptimizationalgorithms}
Schulman, J., Wolski, F., Dhariwal, P., Radford, A., and Klimov, O.
\newblock Proximal policy optimization algorithms, 2017.
\newblock URL \url{https://arxiv.org/abs/1707.06347}.

\bibitem[Shao et~al.(2024)Shao, Wang, Zhu, Xu, Song, Bi, Zhang, Zhang, Li, Wu, and Guo]{shao2024deepseekmathpushinglimitsmathematical}
Shao, Z., Wang, P., Zhu, Q., Xu, R., Song, J., Bi, X., Zhang, H., Zhang, M., Li, Y.~K., Wu, Y., and Guo, D.
\newblock Deepseekmath: Pushing the limits of mathematical reasoning in open language models, 2024.
\newblock URL \url{https://arxiv.org/abs/2402.03300}.

\bibitem[Silver et~al.(2017)Silver, Schrittwieser, Simonyan, Antonoglou, Huang, Guez, Hubert, Baker, Lai, Bolton, et~al.]{silver2017mastering}
Silver, D., Schrittwieser, J., Simonyan, K., Antonoglou, I., Huang, A., Guez, A., Hubert, T., Baker, L., Lai, M., Bolton, A., et~al.
\newblock Mastering the game of go without human knowledge.
\newblock \emph{nature}, 550\penalty0 (7676):\penalty0 354--359, 2017.

\bibitem[Snell et~al.(2025)Snell, Lee, Xu, and Kumar]{snell2025scaling}
Snell, C.~V., Lee, J., Xu, K., and Kumar, A.
\newblock Scaling {LLM} test-time compute optimally can be more effective than scaling parameters for reasoning.
\newblock In \emph{The Thirteenth International Conference on Learning Representations}, 2025.
\newblock URL \url{https://openreview.net/forum?id=4FWAwZtd2n}.

\bibitem[Srivastava et~al.(2023)Srivastava, Rastogi, Rao, Shoeb, Abid, Fisch, Brown, Santoro, Gupta, Garriga-Alonso, Kluska, Lewkowycz, Agarwal, Power, Ray, Warstadt, Kocurek, Safaya, Tazarv, Xiang, Parrish, Nie, Hussain, Askell, Dsouza, Slone, Rahane, Iyer, Andreassen, Madotto, Santilli, Stuhlm{\"u}ller, Dai, La, Lampinen, Zou, Jiang, Chen, Vuong, Gupta, Gottardi, Norelli, Venkatesh, Gholamidavoodi, Tabassum, Menezes, Kirubarajan, Mullokandov, Sabharwal, Herrick, Efrat, Erdem, Karaka{\c{s}}, Roberts, Loe, Zoph, Bojanowski, {\"O}zyurt, Hedayatnia, Neyshabur, Inden, Stein, Ekmekci, Lin, Howald, Orinion, Diao, Dour, Stinson, Argueta, Ferri, Singh, Rathkopf, Meng, Baral, Wu, Callison-Burch, Waites, Voigt, Manning, Potts, Ramirez, Rivera, Siro, Raffel, Ashcraft, Garbacea, Sileo, Garrette, Hendrycks, Kilman, Roth, Freeman, Khashabi, Levy, Gonz{\'a}lez, Perszyk, Hernandez, Chen, Ippolito, Gilboa, Dohan, Drakard, Jurgens, Datta, Ganguli, Emelin, Kleyko, Yuret, Chen, Tam, Hupkes, Misra, Buzan, Mollo, Yang, Lee,
  Schrader, Shutova, Cubuk, Segal, Hagerman, Barnes, Donoway, Pavlick, Rodol{\`a}, Lam, Chu, Tang, Erdem, Chang, Chi, Dyer, Jerzak, Kim, Manyasi, Zheltonozhskii, Xia, Siar, Mart{\'\i}nez-Plumed, Happ{\'e}, Chollet, Rong, Mishra, Winata, de~Melo, Kruszewski, Parascandolo, Mariani, Wang, Jaimovitch-Lopez, Betz, Gur-Ari, Galijasevic, Kim, Rashkin, Hajishirzi, Mehta, Bogar, Shevlin, Schuetze, Yakura, Zhang, Wong, Ng, Noble, Jumelet, Geissinger, Kernion, Hilton, Lee, Fisac, Simon, Koppel, Zheng, Zou, Kocon, Thompson, Wingfield, Kaplan, Radom, Sohl-Dickstein, Phang, Wei, Yosinski, Novikova, Bosscher, Marsh, Kim, Taal, Engel, Alabi, Xu, Song, Tang, Waweru, Burden, Miller, Balis, Batchelder, Berant, Frohberg, Rozen, Hernandez-Orallo, Boudeman, Guerr, Jones, Tenenbaum, Rule, Chua, Kanclerz, Livescu, Krauth, Gopalakrishnan, Ignatyeva, Markert, Dhole, Gimpel, Omondi, Mathewson, Chiafullo, Shkaruta, Shridhar, McDonell, Richardson, Reynolds, Gao, Zhang, Dugan, Qin, Contreras-Ochando, Morency, Moschella, Lam, Noble,
  Schmidt, He, Oliveros-Col{\'o}n, Metz, Senel, Bosma, Sap, Hoeve, Farooqi, Faruqui, Mazeika, Baturan, Marelli, Maru, Ramirez-Quintana, Tolkiehn, Giulianelli, Lewis, Potthast, Leavitt, Hagen, Schubert, Baitemirova, Arnaud, McElrath, Yee, Cohen, Gu, Ivanitskiy, Starritt, Strube, Sw{\k{e}}drowski, Bevilacqua, Yasunaga, Kale, Cain, Xu, Suzgun, Walker, Tiwari, Bansal, Aminnaseri, Geva, Gheini, T, Peng, Chi, Lee, Krakover, Cameron, Roberts, Doiron, Martinez, Nangia, Deckers, Muennighoff, Keskar, Iyer, Constant, Fiedel, Wen, Zhang, Agha, Elbaghdadi, Levy, Evans, Casares, Doshi, Fung, Liang, Vicol, Alipoormolabashi, Liao, Liang, Chang, Eckersley, Htut, Hwang, Mi{\l}kowski, Patil, Pezeshkpour, Oli, Mei, Lyu, Chen, Banjade, Rudolph, Gabriel, Habacker, Risco, Milli{\`e}re, Garg, Barnes, Saurous, Arakawa, Raymaekers, Frank, Sikand, Novak, Sitelew, Bras, Liu, Jacobs, Zhang, Salakhutdinov, Chi, Lee, Stovall, Teehan, Yang, Singh, Mohammad, Anand, Dillavou, Shleifer, Wiseman, Gruetter, Bowman, Schoenholz, Han, Kwatra, Rous,
  Ghazarian, Ghosh, Casey, Bischoff, Gehrmann, Schuster, Sadeghi, Hamdan, Zhou, Srivastava, Shi, Singh, Asaadi, Gu, Pachchigar, Toshniwal, Upadhyay, Debnath, Shakeri, Thormeyer, Melzi, Reddy, Makini, Lee, Torene, Hatwar, Dehaene, Divic, Ermon, Biderman, Lin, Prasad, Piantadosi, Shieber, Misherghi, Kiritchenko, Mishra, Linzen, Schuster, Li, Yu, Ali, Hashimoto, Wu, Desbordes, Rothschild, Phan, Wang, Nkinyili, Schick, Kornev, Tunduny, Gerstenberg, Chang, Neeraj, Khot, Shultz, Shaham, Misra, Demberg, Nyamai, Raunak, Ramasesh, vinay~uday prabhu, Padmakumar, Srikumar, Fedus, Saunders, Zhang, Vossen, Ren, Tong, Zhao, Wu, Shen, Yaghoobzadeh, Lakretz, Song, Bahri, Choi, Yang, Hao, Chen, Belinkov, Hou, Hou, Bai, Seid, Zhao, Wang, Wang, Wang, and Wu]{srivastava2023beyond}
Srivastava, A., Rastogi, A., Rao, A., Shoeb, A. A.~M., Abid, A., Fisch, A., Brown, A.~R., Santoro, A., Gupta, A., Garriga-Alonso, A., Kluska, A., Lewkowycz, A., Agarwal, A., Power, A., Ray, A., Warstadt, A., Kocurek, A.~W., Safaya, A., Tazarv, A., Xiang, A., Parrish, A., Nie, A., Hussain, A., Askell, A., Dsouza, A., Slone, A., Rahane, A., Iyer, A.~S., Andreassen, A.~J., Madotto, A., Santilli, A., Stuhlm{\"u}ller, A., Dai, A.~M., La, A., Lampinen, A.~K., Zou, A., Jiang, A., Chen, A., Vuong, A., Gupta, A., Gottardi, A., Norelli, A., Venkatesh, A., Gholamidavoodi, A., Tabassum, A., Menezes, A., Kirubarajan, A., Mullokandov, A., Sabharwal, A., Herrick, A., Efrat, A., Erdem, A., Karaka{\c{s}}, A., Roberts, B.~R., Loe, B.~S., Zoph, B., Bojanowski, B., {\"O}zyurt, B., Hedayatnia, B., Neyshabur, B., Inden, B., Stein, B., Ekmekci, B., Lin, B.~Y., Howald, B., Orinion, B., Diao, C., Dour, C., Stinson, C., Argueta, C., Ferri, C., Singh, C., Rathkopf, C., Meng, C., Baral, C., Wu, C., Callison-Burch, C., Waites, C., Voigt,
  C., Manning, C.~D., Potts, C., Ramirez, C., Rivera, C.~E., Siro, C., Raffel, C., Ashcraft, C., Garbacea, C., Sileo, D., Garrette, D., Hendrycks, D., Kilman, D., Roth, D., Freeman, C.~D., Khashabi, D., Levy, D., Gonz{\'a}lez, D.~M., Perszyk, D., Hernandez, D., Chen, D., Ippolito, D., Gilboa, D., Dohan, D., Drakard, D., Jurgens, D., Datta, D., Ganguli, D., Emelin, D., Kleyko, D., Yuret, D., Chen, D., Tam, D., Hupkes, D., Misra, D., Buzan, D., Mollo, D.~C., Yang, D., Lee, D.-H., Schrader, D., Shutova, E., Cubuk, E.~D., Segal, E., Hagerman, E., Barnes, E., Donoway, E., Pavlick, E., Rodol{\`a}, E., Lam, E., Chu, E., Tang, E., Erdem, E., Chang, E., Chi, E.~A., Dyer, E., Jerzak, E., Kim, E., Manyasi, E.~E., Zheltonozhskii, E., Xia, F., Siar, F., Mart{\'\i}nez-Plumed, F., Happ{\'e}, F., Chollet, F., Rong, F., Mishra, G., Winata, G.~I., de~Melo, G., Kruszewski, G., Parascandolo, G., Mariani, G., Wang, G.~X., Jaimovitch-Lopez, G., Betz, G., Gur-Ari, G., Galijasevic, H., Kim, H., Rashkin, H., Hajishirzi, H., Mehta,
  H., Bogar, H., Shevlin, H. F.~A., Schuetze, H., Yakura, H., Zhang, H., Wong, H.~M., Ng, I., Noble, I., Jumelet, J., Geissinger, J., Kernion, J., Hilton, J., Lee, J., Fisac, J.~F., Simon, J.~B., Koppel, J., Zheng, J., Zou, J., Kocon, J., Thompson, J., Wingfield, J., Kaplan, J., Radom, J., Sohl-Dickstein, J., Phang, J., Wei, J., Yosinski, J., Novikova, J., Bosscher, J., Marsh, J., Kim, J., Taal, J., Engel, J., Alabi, J., Xu, J., Song, J., Tang, J., Waweru, J., Burden, J., Miller, J., Balis, J.~U., Batchelder, J., Berant, J., Frohberg, J., Rozen, J., Hernandez-Orallo, J., Boudeman, J., Guerr, J., Jones, J., Tenenbaum, J.~B., Rule, J.~S., Chua, J., Kanclerz, K., Livescu, K., Krauth, K., Gopalakrishnan, K., Ignatyeva, K., Markert, K., Dhole, K., Gimpel, K., Omondi, K., Mathewson, K.~W., Chiafullo, K., Shkaruta, K., Shridhar, K., McDonell, K., Richardson, K., Reynolds, L., Gao, L., Zhang, L., Dugan, L., Qin, L., Contreras-Ochando, L., Morency, L.-P., Moschella, L., Lam, L., Noble, L., Schmidt, L., He, L.,
  Oliveros-Col{\'o}n, L., Metz, L., Senel, L.~K., Bosma, M., Sap, M., Hoeve, M.~T., Farooqi, M., Faruqui, M., Mazeika, M., Baturan, M., Marelli, M., Maru, M., Ramirez-Quintana, M.~J., Tolkiehn, M., Giulianelli, M., Lewis, M., Potthast, M., Leavitt, M.~L., Hagen, M., Schubert, M., Baitemirova, M.~O., Arnaud, M., McElrath, M., Yee, M.~A., Cohen, M., Gu, M., Ivanitskiy, M., Starritt, M., Strube, M., Sw{\k{e}}drowski, M., Bevilacqua, M., Yasunaga, M., Kale, M., Cain, M., Xu, M., Suzgun, M., Walker, M., Tiwari, M., Bansal, M., Aminnaseri, M., Geva, M., Gheini, M., T, M.~V., Peng, N., Chi, N.~A., Lee, N., Krakover, N. G.-A., Cameron, N., Roberts, N., Doiron, N., Martinez, N., Nangia, N., Deckers, N., Muennighoff, N., Keskar, N.~S., Iyer, N.~S., Constant, N., Fiedel, N., Wen, N., Zhang, O., Agha, O., Elbaghdadi, O., Levy, O., Evans, O., Casares, P. A.~M., Doshi, P., Fung, P., Liang, P.~P., Vicol, P., Alipoormolabashi, P., Liao, P., Liang, P., Chang, P.~W., Eckersley, P., Htut, P.~M., Hwang, P., Mi{\l}kowski, P.,
  Patil, P., Pezeshkpour, P., Oli, P., Mei, Q., Lyu, Q., Chen, Q., Banjade, R., Rudolph, R.~E., Gabriel, R., Habacker, R., Risco, R., Milli{\`e}re, R., Garg, R., Barnes, R., Saurous, R.~A., Arakawa, R., Raymaekers, R., Frank, R., Sikand, R., Novak, R., Sitelew, R., Bras, R.~L., Liu, R., Jacobs, R., Zhang, R., Salakhutdinov, R., Chi, R.~A., Lee, S.~R., Stovall, R., Teehan, R., Yang, R., Singh, S., Mohammad, S.~M., Anand, S., Dillavou, S., Shleifer, S., Wiseman, S., Gruetter, S., Bowman, S.~R., Schoenholz, S.~S., Han, S., Kwatra, S., Rous, S.~A., Ghazarian, S., Ghosh, S., Casey, S., Bischoff, S., Gehrmann, S., Schuster, S., Sadeghi, S., Hamdan, S., Zhou, S., Srivastava, S., Shi, S., Singh, S., Asaadi, S., Gu, S.~S., Pachchigar, S., Toshniwal, S., Upadhyay, S., Debnath, S.~S., Shakeri, S., Thormeyer, S., Melzi, S., Reddy, S., Makini, S.~P., Lee, S.-H., Torene, S., Hatwar, S., Dehaene, S., Divic, S., Ermon, S., Biderman, S., Lin, S., Prasad, S., Piantadosi, S., Shieber, S., Misherghi, S., Kiritchenko, S., Mishra,
  S., Linzen, T., Schuster, T., Li, T., Yu, T., Ali, T., Hashimoto, T., Wu, T.-L., Desbordes, T., Rothschild, T., Phan, T., Wang, T., Nkinyili, T., Schick, T., Kornev, T., Tunduny, T., Gerstenberg, T., Chang, T., Neeraj, T., Khot, T., Shultz, T., Shaham, U., Misra, V., Demberg, V., Nyamai, V., Raunak, V., Ramasesh, V.~V., vinay~uday prabhu, Padmakumar, V., Srikumar, V., Fedus, W., Saunders, W., Zhang, W., Vossen, W., Ren, X., Tong, X., Zhao, X., Wu, X., Shen, X., Yaghoobzadeh, Y., Lakretz, Y., Song, Y., Bahri, Y., Choi, Y., Yang, Y., Hao, S., Chen, Y., Belinkov, Y., Hou, Y., Hou, Y., Bai, Y., Seid, Z., Zhao, Z., Wang, Z., Wang, Z.~J., Wang, Z., and Wu, Z.
\newblock Beyond the imitation game: Quantifying and extrapolating the capabilities of language models.
\newblock \emph{Transactions on Machine Learning Research}, 2023.
\newblock ISSN 2835-8856.
\newblock URL \url{https://openreview.net/forum?id=uyTL5Bvosj}.
\newblock Featured Certification.

\bibitem[Sutton \& Barto(2018)Sutton and Barto]{sutton2018reinforcement}
Sutton, R.~S. and Barto, A.~G.
\newblock \emph{Reinforcement learning: An introduction}.
\newblock MIT press, 2018.

\bibitem[Team et~al.(2025)Team, Du, Gao, Xing, Jiang, Chen, Li, Xiao, Du, Liao, Tang, Wang, Zhang, Yuan, Lu, Tang, Sung, Wei, Lai, Guo, Zhu, Ding, Hu, Yang, Zhang, Yao, Zhao, Lu, Li, Yu, Gao, Zheng, Yuan, Chen, Guo, Su, Wang, Zhao, Zhang, Liu, Yan, Wu, Shi, Ye, Yu, Dong, Zhang, Ma, Pan, Gong, Liu, Ma, Wei, Cao, Huang, Jiang, Gao, Xiong, He, Huang, Wu, He, Wei, Jia, Wu, Xu, Zu, Zhou, Pan, Charles, Li, Hu, Liu, Chen, Wang, Liu, Qin, Liu, Yang, Bao, Du, Wu, Wang, Zhou, Wang, Li, Zhu, Zhang, Wang, Yang, Huang, Huang, Xu, and Yang]{kimiteam2025kimik15scalingreinforcement}
Team, K., Du, A., Gao, B., Xing, B., Jiang, C., Chen, C., Li, C., Xiao, C., Du, C., Liao, C., Tang, C., Wang, C., Zhang, D., Yuan, E., Lu, E., Tang, F., Sung, F., Wei, G., Lai, G., Guo, H., Zhu, H., Ding, H., Hu, H., Yang, H., Zhang, H., Yao, H., Zhao, H., Lu, H., Li, H., Yu, H., Gao, H., Zheng, H., Yuan, H., Chen, J., Guo, J., Su, J., Wang, J., Zhao, J., Zhang, J., Liu, J., Yan, J., Wu, J., Shi, L., Ye, L., Yu, L., Dong, M., Zhang, N., Ma, N., Pan, Q., Gong, Q., Liu, S., Ma, S., Wei, S., Cao, S., Huang, S., Jiang, T., Gao, W., Xiong, W., He, W., Huang, W., Wu, W., He, W., Wei, X., Jia, X., Wu, X., Xu, X., Zu, X., Zhou, X., Pan, X., Charles, Y., Li, Y., Hu, Y., Liu, Y., Chen, Y., Wang, Y., Liu, Y., Qin, Y., Liu, Y., Yang, Y., Bao, Y., Du, Y., Wu, Y., Wang, Y., Zhou, Z., Wang, Z., Li, Z., Zhu, Z., Zhang, Z., Wang, Z., Yang, Z., Huang, Z., Huang, Z., Xu, Z., and Yang, Z.
\newblock Kimi k1.5: Scaling reinforcement learning with llms, 2025.
\newblock URL \url{https://arxiv.org/abs/2501.12599}.

\bibitem[Vaswani(2017)]{vaswani2017attention}
Vaswani, A.
\newblock Attention is all you need.
\newblock \emph{Advances in Neural Information Processing Systems}, 2017.

\bibitem[Wainwright(2019)]{wainwright2019high}
Wainwright, M.~J.
\newblock \emph{High-dimensional statistics: A non-asymptotic viewpoint}, volume~48.
\newblock Cambridge university press, 2019.

\bibitem[Wang et~al.(2022)Wang, Wei, Schuurmans, Le, Chi, Narang, Chowdhery, and Zhou]{wang2022self}
Wang, X., Wei, J., Schuurmans, D., Le, Q., Chi, E., Narang, S., Chowdhery, A., and Zhou, D.
\newblock Self-consistency improves chain of thought reasoning in language models.
\newblock \emph{arXiv preprint arXiv:2203.11171}, 2022.

\bibitem[Wei et~al.(2022)Wei, Wang, Schuurmans, Bosma, Xia, Chi, Le, Zhou, et~al.]{wei2022chain}
Wei, J., Wang, X., Schuurmans, D., Bosma, M., Xia, F., Chi, E., Le, Q.~V., Zhou, D., et~al.
\newblock Chain-of-thought prompting elicits reasoning in large language models.
\newblock \emph{Advances in neural information processing systems}, 35:\penalty0 24824--24837, 2022.

\bibitem[Welleck et~al.(2023)Welleck, Lu, West, Brahman, Shen, Khashabi, and Choi]{welleck2023generating}
Welleck, S., Lu, X., West, P., Brahman, F., Shen, T., Khashabi, D., and Choi, Y.
\newblock Generating sequences by learning to self-correct.
\newblock In \emph{The Eleventh International Conference on Learning Representations}, 2023.
\newblock URL \url{https://openreview.net/forum?id=hH36JeQZDaO}.

\bibitem[Xia et~al.(2025)Xia, Li, Leong, Wang, and Li]{xia2025tokenskipcontrollablechainofthoughtcompression}
Xia, H., Li, Y., Leong, C.~T., Wang, W., and Li, W.
\newblock Tokenskip: Controllable chain-of-thought compression in llms, 2025.
\newblock URL \url{https://arxiv.org/abs/2502.12067}.

\bibitem[Yao et~al.(2024)Yao, Yu, Zhao, Shafran, Griffiths, Cao, and Narasimhan]{yao2024tree}
Yao, S., Yu, D., Zhao, J., Shafran, I., Griffiths, T., Cao, Y., and Narasimhan, K.
\newblock Tree of thoughts: Deliberate problem solving with large language models.
\newblock \emph{Advances in Neural Information Processing Systems}, 36, 2024.

\bibitem[Zhou et~al.(2024)Zhou, Ning, Hong, Fu, Xu, Li, Lou, Wang, Yuan, Li, Yan, Dai, Zhang, Dong, and Wang]{zhou2024surveyefficientinferencelarge}
Zhou, Z., Ning, X., Hong, K., Fu, T., Xu, J., Li, S., Lou, Y., Wang, L., Yuan, Z., Li, X., Yan, S., Dai, G., Zhang, X.-P., Dong, Y., and Wang, Y.
\newblock A survey on efficient inference for large language models, 2024.
\newblock URL \url{https://arxiv.org/abs/2404.14294}.

\end{thebibliography}
\bibliographystyle{neurips_2025}

\newpage
\clearpage
\appendix

\def\len{\textsc{len}}
\def\accuracy{\textsc{accuracy}}

\section{Implementation Details}
\label{app:implementation_details}

We build on the OpenRLHF codebase \cite{hu2024openrlhfeasytousescalablehighperformance}. For the 1.5B model, we use 4 GH200 GPUs on one low-density node and for the 7B model, we use 8 GH200 GPUs distributed across two low-density nodes (4 GPUs per node). We set vLLM to the maximum context length (32K) during generation and set the generation temperature to 1. For training the 1.5B, ZeRO Stage 2 \cite{rajbhandari2020zeromemoryoptimizationstraining} is used and for the 7B, ZeRO Stage 3 with activation checkpointing is required to prevent out of memory errors. The training precision is set to bfloat16. We generate 8 responses for each prompt. For every iteration, 32 prompts are selected from the dataset and the global batch size is set to 128 which leads to 2 gradient steps per RL iteration. For the 1.5B, the learning rate is set to $5\cdot10^{-6}$ and for the 7B, it is set to $2\cdot10^{-6}$. For all experiments, Adam \cite{kingma2017adammethodstochasticoptimization} is used as the standard optimizer. We experiment with 4 values of $\alpha$ in the following range: $0.05, 0.1, 0.2$ and $0.4$. For all RL experiments, the value of the KL coefficient is set to $1\cdot 10^{-3}$. The experiments on both model take approximately 20 hours. We use the same prompt template for all models which can be found in Appendix \ref{sec:prompt_appendix}.

\section{Qualitative examples of the model before and after training}

\subsection{Example 1}
\label{app:1+1}

We prompt the Distilled 7B model and our model trained with $\alpha=0.4$ on a simple prompt.

Prompt: Please reason step by step, and put your final answer within \textbackslash boxed\{\}. Question: What is 1+1?

\textbf{Model (before training):}

\texttt{<think>
Okay, so I need to figure out what 1 plus 1 is. Hmm, let me think about this. I remember in school, they taught us that adding numbers is like combining groups of things. So, if I have one apple and someone gives me another apple, how many apples do I have? I guess that's two apples. 
Wait, let me try to visualize it. Imagine I have one block here and another block there. If I put them together, how many blocks do I have in total? That's right, two blocks. So, 1 plus 1 equals 2. 
But hold on, is this always the case? Like, in different number systems or something? I mean, I know in binary, 1 plus 1 is 10, but I think that's a different topic. The question is about regular addition, right? So, in standard arithmetic, 1 plus 1 is definitely 2. 
I can also think about the number line. If I start at 1 and move one step forward, where do I land? That's 2. So, that makes sense too. 
Maybe I can use my fingers to check. If I hold up one finger and then hold up another finger, how many fingers am I showing? Two fingers. Yep, that confirms it. 
Is there a more mathematical way to look at this? I remember something about addition being the simplest arithmetic operation. It's the basis for more complex calculations. So, 1 plus 1 is the first step in understanding addition. 
I also recall that in set theory, the sum of two sets with one element each results in a set with two elements. So, if set A has one element and set B has one element, then the union of A and B has two elements. That's another way to see that 1 plus 1 equals 2.
Wait, but in some programming languages, like Python, what happens if I write 1 + 1? Let me think. If I open a Python interpreter, and type 1 + 1, it should output 2. Yeah, that's correct. So, even in code, it's 2. 
Is there any context where 1 plus 1 doesn't equal 2? Like in different mathematical structures or something? Hmm, in Boolean algebra, sometimes 1 plus 1 is 1, but that's not standard arithmetic anymore. It's more like a logical OR operation. But the original question is about regular addition, so that's not applicable here.
Also, in some contexts like in a group where the operation is defined differently, 1 + 1 might not be 2, but again, that's beyond the scope of basic arithmetic. 
I guess another way to think about it is using the Peano axioms, which are the foundation of arithmetic. According to these axioms, 1 is a natural number, and the successor of 1 is 2. So, adding 1 to 1 gives you the successor of 1, which is 2. That's pretty solid.
Maybe I can also use real-life examples. If I have a glass with one liter of water and I pour another liter into it, how much water do I have? Two liters. So, that's another practical demonstration that 1 plus 1 equals 2.
I wonder if there's a geometric interpretation. If I have a line segment of length 1 and another line segment of length 1, and I place them end to end, the total length would be 2. Yep, that works too.
So, after thinking through all these different angles—counting, number lines, sets, programming, Boolean algebra, axioms, real-life examples, and geometry—it all consistently points to the fact that 1 plus 1 equals 2. There doesn't seem to be any situation within the realm of standard arithmetic where this isn't true. 
I guess another way to reinforce this is by using objects I have at hand. Like, if I have one book and someone gives me another book, how many books do I have? Two books. Or if I have one toy car and find another toy car, now I have two toy cars. This practical application really helps solidify the concept.
I also remember learning about the commutative property of addition, which states that the order of numbers being added doesn't change the result. So, 1 + 1 is the same as 1 + 1, which is still 2. And the associative property, which allows us to add more than two numbers together without changing the result. Although, in this case, we're only adding two numbers, so that might not apply directly, but it's good to know for more complex problems.
Additionally, thinking about money could help. If I have one dollar and I receive another dollar, how much money do I have? Two dollars. That's another real-world example supporting the idea that 1 plus 1 equals 2.
I suppose another way to view it is through tally marks. If I make one tally mark and then make another tally mark, how many marks do I have? Two tally marks. So, that's another visual method to confirm it.
I also recall that in some early number systems, like Roman numerals, 1 is represented by 'I', and adding another 'I' gives you 'II', which stands for two. So, even in different numeral systems, the concept holds.
Thinking about the history of mathematics, the concept of 1 plus 1 equals 2 was established long ago. Euclid defined numbers and their operations in his "Elements," laying the foundation for arithmetic as we know it today. So, this basic principle has been consistent throughout history.
In conclusion, after considering all these various perspectives—counting, number lines, sets, programming, Boolean algebra, axioms, real-life examples, geometry, and even the history of math—it's clear that 1 plus 1 equals 2 in standard arithmetic. There's no ambiguity or alternative result in these different contexts. So, I can confidently say that the answer is 2.
</think>
1 plus 1 equals 2. This is consistently true across various mathematical contexts, from basic arithmetic to more complex structures. 
    $\boxed{2}$
}

\textbf{Model (after training, $\alpha = 0.4$):} 
\texttt{
<think>
To solve (1 + 1), I start by identifying the two numbers involved, which are both 1. Next, I add these two numbers together. 
Finally, the sum of 1 and 1 is 2.
</think>
To solve 1 + 1, follow these simple steps:
1. **Identify the numbers to add:** Both numbers are 1.
2. **Perform the addition:**
   1 + 1 = 2  
Therefore, the final answer is \boxed{2}.}

\pagebreak

\subsection{Example 2}
\label{0.2_v_0.4}

We also note that the model generally gets less verbose depending on the strength of the regularization. Take a look at the example below:

\textbf{Prompt:}

\texttt{Kathy needs money for her trip to Europe. If she has 300 US dollars in the bank but wants to withdraw half of it in British pounds and half of it in euros, how many more euros than pounds will she have? Assume 1 pound = 1.64 USD and 1 euro = 1.32 USD, and round to the nearest whole number.}
\\

\textbf{Response ($\alpha=0.2$):}

\texttt{<think>\\
First, Kathy has \$300 USD. She wants to withdraw half in British pounds and half in euros, so each amount will be \$150.\\
\\
To convert \$150 to pounds, I'll divide 150 by the exchange rate of 1.64. This gives approximately 91.46 pounds.\\
\\
For the euros, I'll divide \$150 by the euro exchange rate of 1.32, resulting in approximately 113.64 euros.\\
\\
Finally, subtracting the pounds from the euros gives about 22.18, which rounds to 22 euros.\\
</think>\\
\\
Kathy has \$300 USD. She wants to withdraw half in pounds and half in euros:\\
\\
\[
\text{Amount in pounds} = \frac{150}{1.64} \approx 91.46 \, \text{GBP}
\]
\[
\text{Amount in euros} = \frac{150}{1.32} \approx 113.64 \, \text{EUR}
\]
\\
The difference is:\\
\\
\[
113.64 - 91.46 = 22.18 \approx 22
\]
\\
Kathy will have \(\boxed{22}\) more euros than pounds.\\
}
\\

\textbf{Response ($\alpha=0.4$):}

\texttt{<think>\\
First, calculate the amounts in USD to be converted to pounds and euros.\\
\\
Next, convert each amount to the respective currency.\\
\\
Finally, find the difference between the euro and pound amounts.\\
</think>\\
\\
Kathy withdraws \$150 in pounds and \$150 in euros.\\
Pounds: 150 / 1.64 $\approx$ 91.46\\
\\
Euros: 150 / 1.32 $\approx$ 113.64\\
\\
Difference: 113.64 - 91.46 $\approx$ 22.18\\
\\
Rounded to the nearest whole number: $\boxed{22}$
}

\section{Extended Results}
\label{sec:detailed_results}
Here, we provide detailed results in tabular and graphical format for all the models and baselines. Firstly, we present the vLLM-cutoff baseline along with the other baselines in Figure \ref{fig:plot_with_cutoff}. Secondly, \cref{fig:sensitivity_plot} contains a sensitivity plot of the model's accuracy and token usage as $\coeff$ increases. The numbers are present in 
the numerical format for ease of future usage by researchers in  \cref{table:datasets/converted_aime_dataset_1.5B,table:di-zhang-fdu/MATH500_1.5B,table:openai/gsm8k_1.5B,table:commonsense_qa_1.5B,table:logical_deduction_1.5B,table:datasets/converted_aime_dataset_7B,table:di-zhang-fdu/MATH500_7B,table:openai/gsm8k_7B,table:commonsense_qa_7B,table:logical_deduction_7B}.

\begin{figure*}[t]
\centering
  \includegraphics[width=\textwidth]{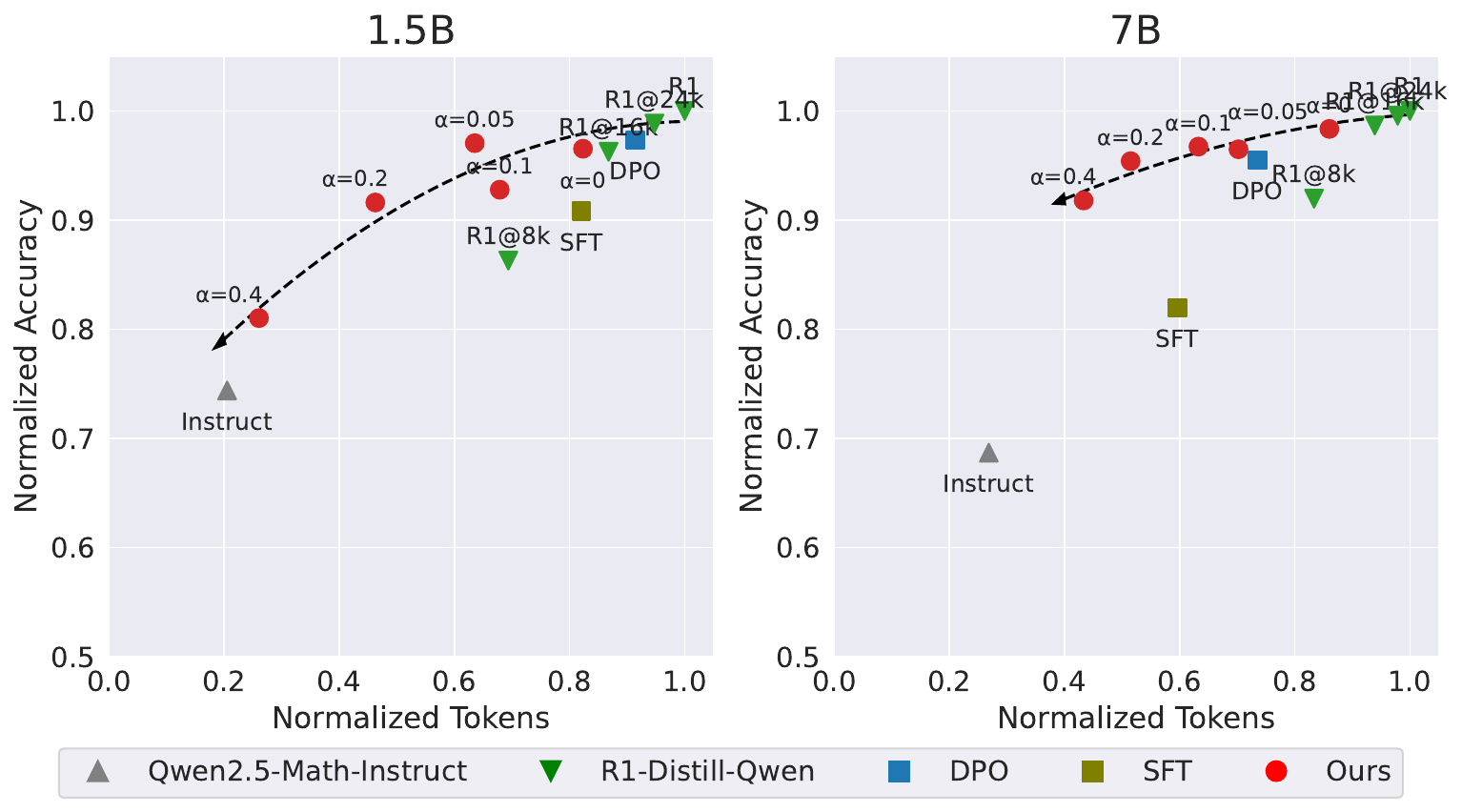}
  \caption{This figures describes the results of our training on 5 evaluation benchmarks. The green triangles represent the vLLM-cutoff baseline at different values of the token cutoff limit. As we can observe, the vLLM-cutoff baseline performs poorly, because we can get higher performing models with lower compute requirements.}
  \label{fig:plot_with_cutoff}
\end{figure*}

\begin{figure*}[t]
\centering
\includegraphics[width=0.9\textwidth]{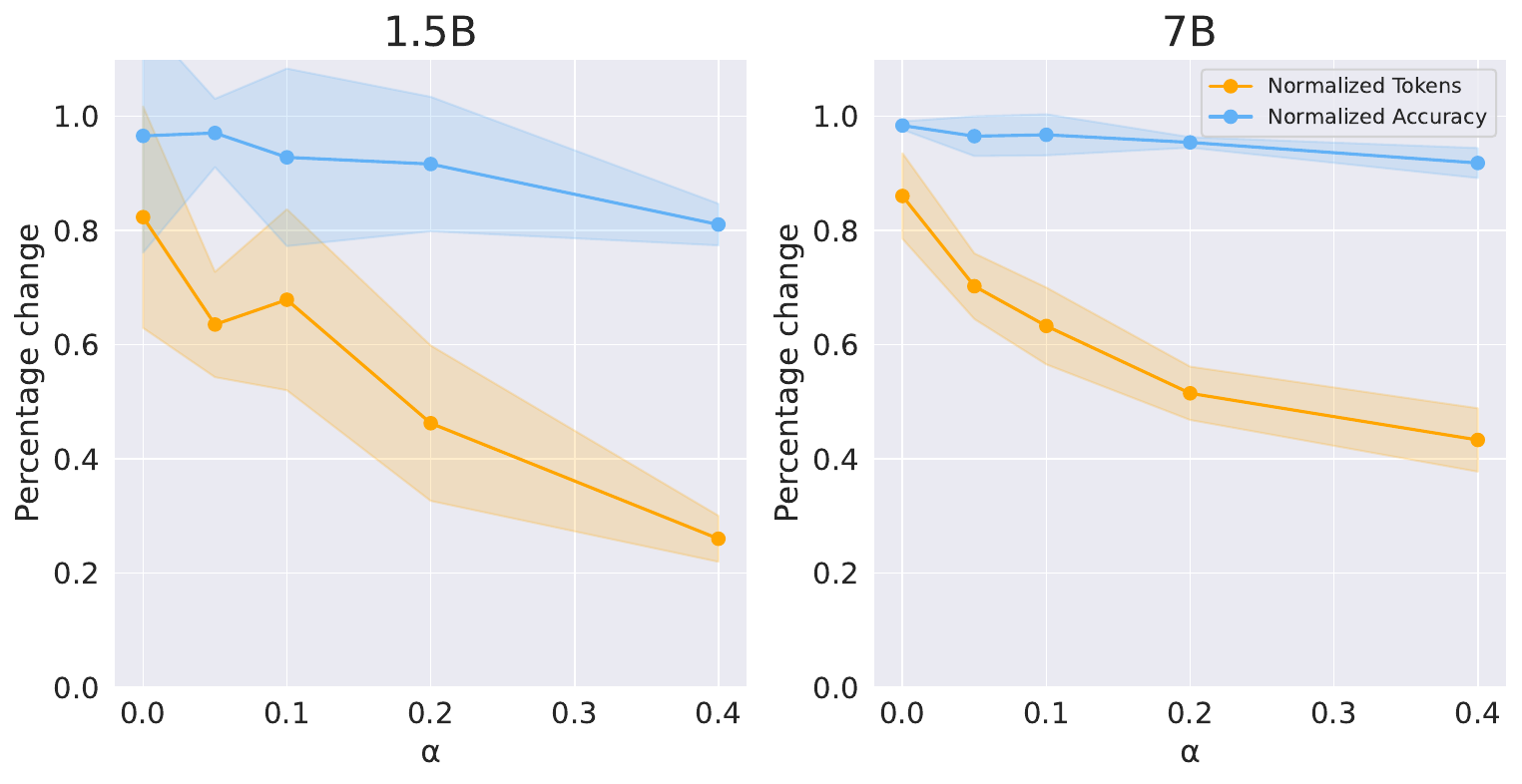}
  \caption{This plot demonstrates the change in normalized accuracy and normalized token usage as a function of the $\coeff$ parameter for the 1.5B and 7B models after training. We use three seeds to create the error bars and aggregate over 5 datasets. As evident, the token usage significantly reduces after training with little drop in accuracy. 
}
  \label{fig:sensitivity_plot}
\end{figure*}

Next, we provide results on all datasets in \cref{fig:plot_aime_2024,fig:plot_math500,fig:plot_gsm8k,fig:plot_commonsense_qa,fig:plot_logical_deduction}.

For the O1-Pruner baseline, we ran O1-Pruner on the DeepSeek-Distill-Qwen-7B model using the official code available online and we adapted it to our dataset and model and ran it with the hyperparameters in their codebase. We chose $\lambda=2$ since Table 3 in their paper recommends it. Our method outperforms O1-Pruner by generating models which have higher accuracy at a lower token cost (specifically $\alpha=0.05$ and $\alpha=0.1$). Detailed results are present in Table \ref{tab:normalized_accuracy_tokens}.

\begin{figure*}[h]
\centering
  \includegraphics[width=\textwidth]{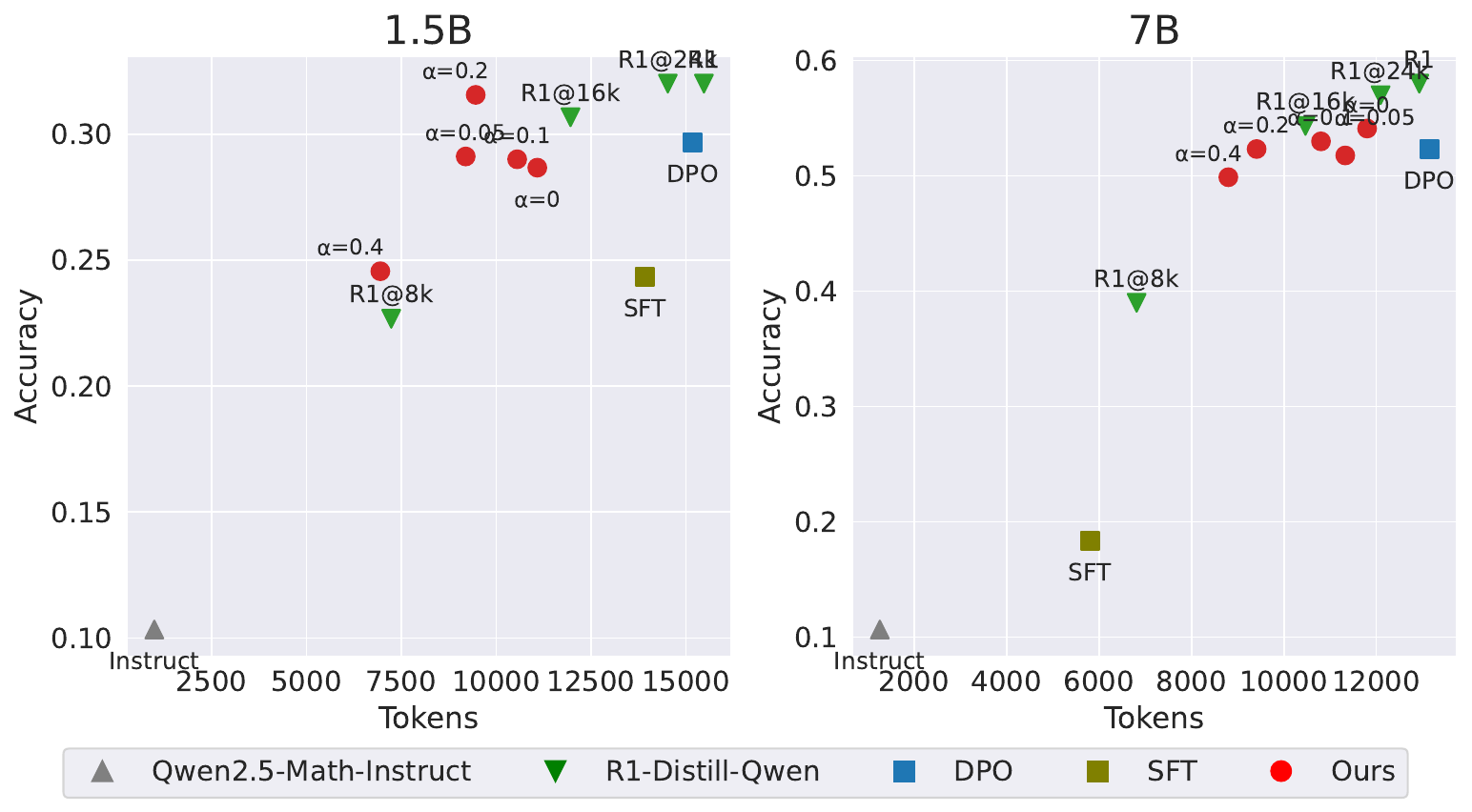}
  \caption{Detailed results on the AIME 2024 dataset.}
  \label{fig:plot_aime_2024}
\end{figure*}

\begin{figure*}[t]
\centering
  \includegraphics[width=\textwidth]{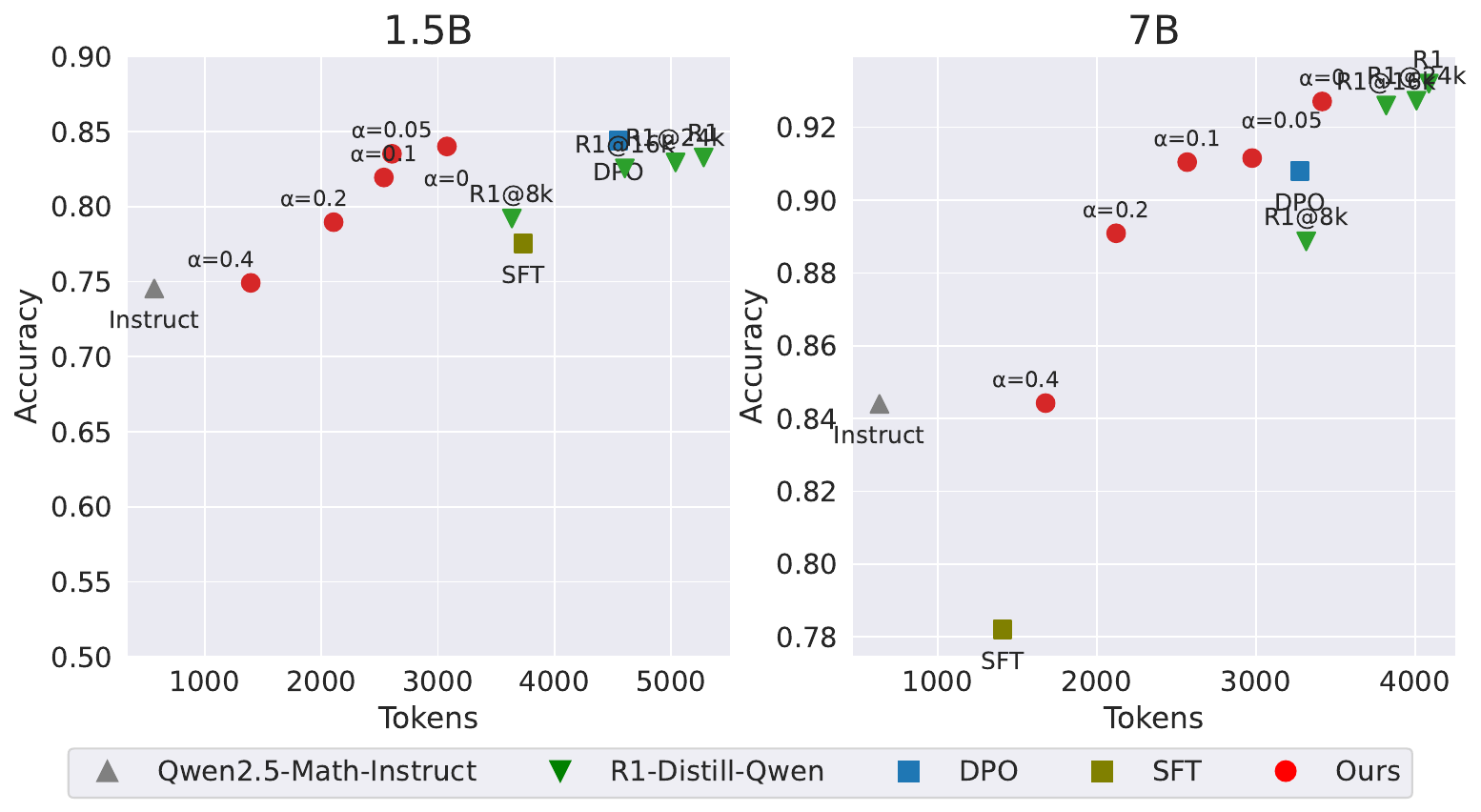}
  \caption{Detailed results on the MATH500 dataset.}
  \label{fig:plot_math500}
\end{figure*}
\begin{figure*}[h!]
\centering
  \includegraphics[width=\textwidth]{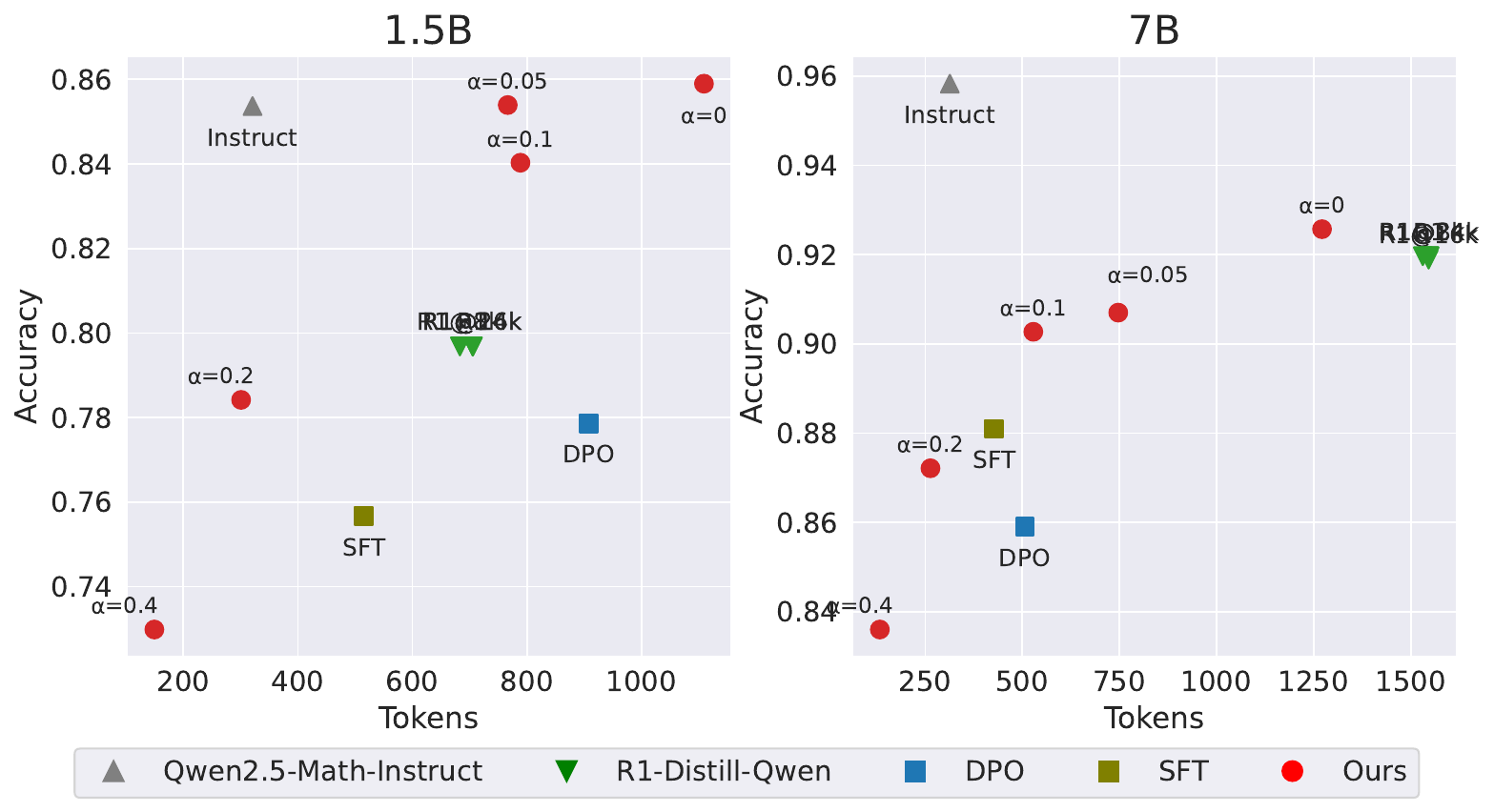}
  \caption{Detailed results on the GSM8K dataset.}
  \label{fig:plot_gsm8k}
\end{figure*}
\begin{figure*}[t]
\centering
  \includegraphics[width=\textwidth]{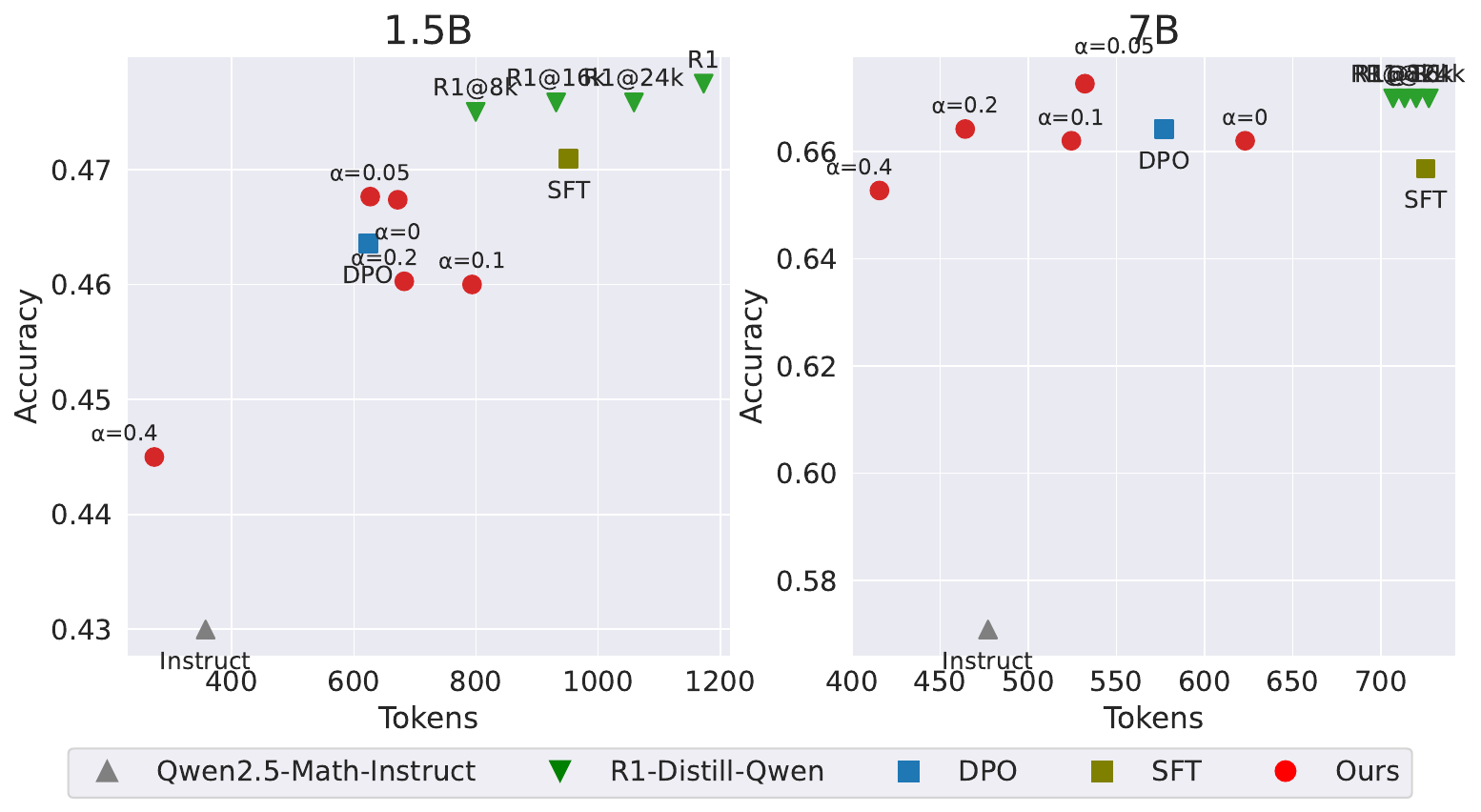}
  \caption{Detailed results on the CommonSenseQA dataset.}
  \label{fig:plot_commonsense_qa}
\end{figure*}
\begin{figure*}[t]
\centering
  \includegraphics[width=\textwidth]{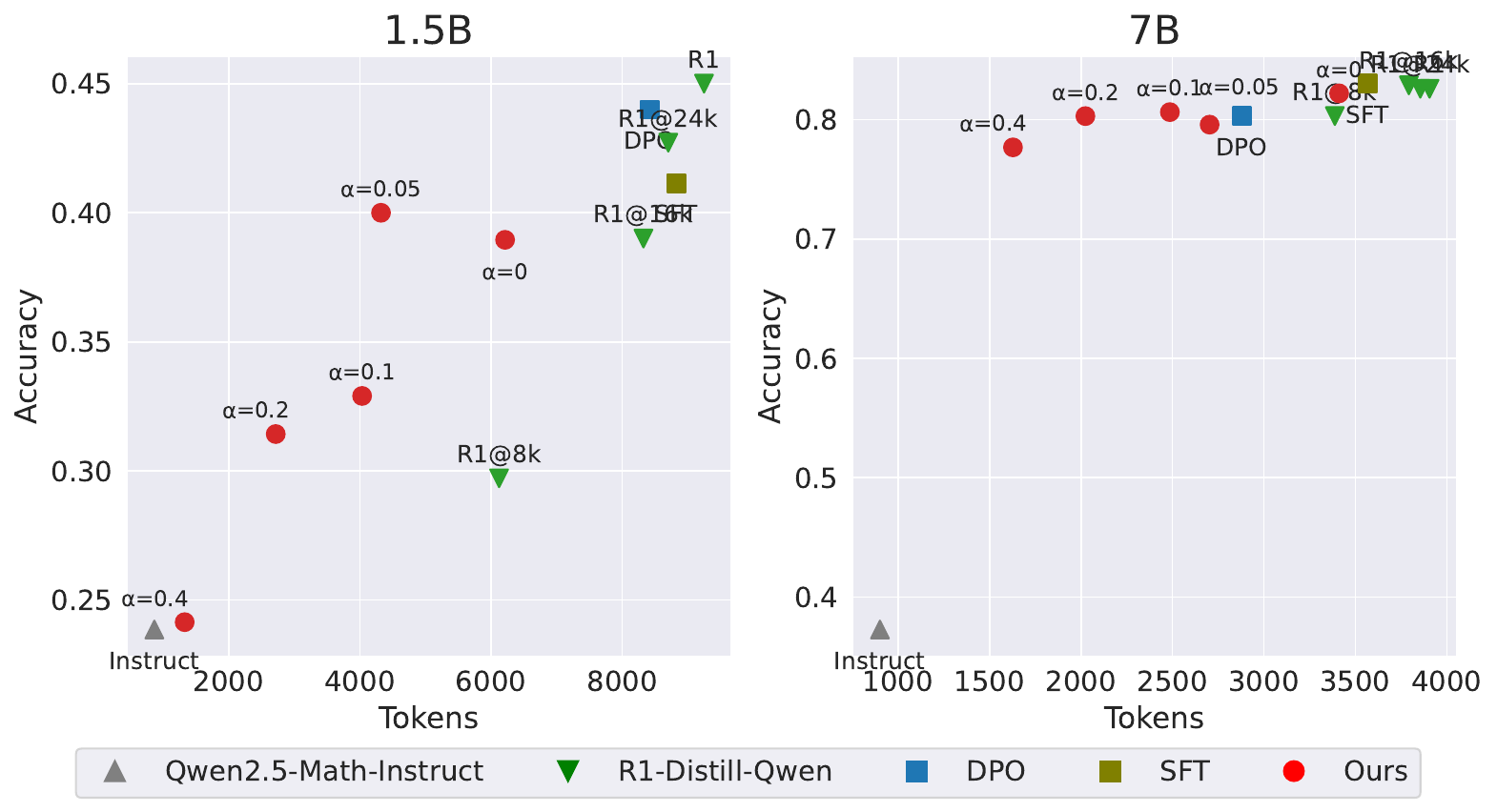}
  \caption{Detailed results on the Logical Deduction dataset.}
  \label{fig:plot_logical_deduction}
\end{figure*}

\label{sec:extended_results}


\clearpage

\begin{table}[h!]
\centering

\begin{tabular}{ccccc}
\hline
\textbf{Model Type} & \textbf{$\alpha$} & \textbf{Tokens} & \textbf{Avg. Pass Rate} & \textbf{Ctx.} \\ \hline
DPO & - & 15184 & 0.297 & 32k \\
Distill & - & 15476 & 0.320 & 32k \\
Distill & - & 14525 & 0.320 & 24k \\
Distill & - & 11960 & 0.307 & 16k \\
Distill & - & 7232 & 0.227 & 8k \\
RL & 0 & 11083 $\pm$ 2022.255   & 0.287 $\pm$ 0.007 & 32k \\
RL & 0.05 & 9196 $\pm$ 897.023 & 0.291 $\pm$ 0.010 & 32k \\
RL & 0.1 & 10552 $\pm$ 1219.273 & 0.290 $ \pm$ 0.000 & 32k \\
RL & 0.2 & 9459 $\pm$ 208.410 & 0.316 $\pm$ 0.037 & 32k \\
RL & 0.4 & 6945 $\pm$ 1661.037 & 0.246 $\pm$ 0.006 & 32k \\
SFT & - & 13922 & 0.243 & 32k\\
\hline
\end{tabular}
\vspace{1em}
\caption{AIME2024 results for 1.5B model}
\label{table:datasets/converted_aime_dataset_1.5B}

\end{table}
\begin{table}[h!]
\centering

\begin{tabular}{ccccc}
\hline
\textbf{Model Type} & \textbf{$\alpha$} & \textbf{Tokens} & \textbf{Avg. Pass Rate} & \textbf{Ctx.} \\ \hline
DPO & - & 4548 & 0.844 & 32k \\
Distill & - & 5278 & 0.833 & 32k \\
Distill & - & 5037 & 0.829 & 24k \\
Distill & - & 4602 & 0.825 & 16k \\
Distill & - & 3634 & 0.792 & 8k \\
RL & 0 & 3077 $\pm$ 478.990   & 0.840 $\pm$ 0.004 & 32k \\
RL & 0.05 & 2606 $\pm$ 189.452 & 0.835 $\pm$ 0.009 & 32k \\
RL & 0.1 & 2536 $\pm$ 162.608 & 0.819 $ \pm$ 0.010 & 32k \\
RL & 0.2 & 2105 $\pm$ 203.024 & 0.790 $\pm$ 0.025 & 32k \\
RL & 0.4 & 1395 $\pm$ 263.378 & 0.749 $\pm$ 0.009 & 32k \\
SFT & - & 3731 & 0.775 & 32k\\
\hline

\end{tabular}
\vspace{1em}
\caption{MATH500 results for 1.5B model}
\label{table:di-zhang-fdu/MATH500_1.5B}

\end{table}
\begin{table}[h!]
\centering

\begin{tabular}{ccccc}
\hline
\textbf{Model Type} & \textbf{$\alpha$} & \textbf{Tokens} & \textbf{Avg. Pass Rate} & \textbf{Ctx.} \\ \hline
DPO & - & 907 & 0.779 & 32k \\
Distill & - & 705 & 0.797 & 32k \\
Distill & - & 705 & 0.797 & 24k \\
Distill & - & 705 & 0.797 & 16k \\
Distill & - & 682 & 0.797 & 8k \\
RL & 0 & 1108 $\pm$ 114.065   & 0.859 $\pm$ 0.008 & 32k \\
RL & 0.05 & 766 $\pm$ 145.017 & 0.854 $\pm$ 0.012 & 32k \\
RL & 0.1 & 788 $\pm$ 130.661 & 0.840 $ \pm$ 0.010 & 32k \\
RL & 0.2 & 300 $\pm$ 80.443 & 0.784 $\pm$ 0.023 & 32k \\
RL & 0.4 & 149 $\pm$ 14.366 & 0.730 $\pm$ 0.036 & 32k \\
SFT & - & 515 & 0.757 & 32k\\
\hline

\end{tabular}
\vspace{1em}
\caption{GSM8K results for 1.5B model}
\label{table:openai/gsm8k_1.5B}

\end{table}
\begin{table}[h!]
\centering
\begin{tabular}{ccccc}
\hline
\textbf{Model Type} & \textbf{$\alpha$} & \textbf{Tokens} & \textbf{Avg. Pass Rate} & \textbf{Ctx.} \\ \hline
DPO & - & 623 & 0.464 & 32k \\
Distill & - & 1172 & 0.477 & 32k \\
Distill & - & 1058 & 0.476 & 24k \\
Distill & - & 931 & 0.476 & 16k \\
Distill & - & 799 & 0.475 & 8k \\
RL & 0 & 671 $\pm$ 67.573   & 0.467 $\pm$ 0.012 & 32k \\
RL & 0.05 & 626 $\pm$ 109.267 & 0.468 $\pm$ 0.013 & 32k \\
RL & 0.1 & 793 $\pm$ 199.640 & 0.460 $ \pm$ 0.009 & 32k \\
RL & 0.2 & 682 $\pm$ 211.393 & 0.460 $\pm$ 0.007 & 32k \\
RL & 0.4 & 273 $\pm$ 77.614 & 0.445 $\pm$ 0.015 & 32k \\
SFT & - & 951 & 0.471 & 32k\\
\hline

\end{tabular}
\vspace{1em}
\caption{CommonSenseQA results for 1.5B model}
\label{table:commonsense_qa_1.5B}
\end{table}
\begin{table}[h!]
\centering

\begin{tabular}{ccccc}
\hline
\textbf{Model Type} & \textbf{$\alpha$} & \textbf{Tokens} & \textbf{Avg. Pass Rate} & \textbf{Ctx.} \\ \hline
DPO & - & 8418 & 0.440 & 32k \\
Distill & - & 9246 & 0.450 & 32k \\
Distill & - & 8701 & 0.427 & 24k \\
Distill & - & 8322 & 0.390 & 16k \\
Distill & - & 6121 & 0.297 & 8k \\
RL & 0 & 6212 $\pm$ 1793.041   & 0.390 $\pm$ 0.092 & 32k \\
RL & 0.05 & 4320 $\pm$ 851.996 & 0.400 $\pm$ 0.027 & 32k \\
RL & 0.1 & 4032 $\pm$ 1464.899 & 0.329 $ \pm$ 0.070 & 32k \\
RL & 0.2 & 2715 $\pm$ 1257.025 & 0.314 $\pm$ 0.053 & 32k \\
RL & 0.4 & 1326 $\pm$ 373.231 & 0.241 $\pm$ 0.016 & 32k \\
SFT & - & 8825 & 0.411 & 32k\\
\hline

\end{tabular}
\vspace{1em}
\caption{Logical Deduction results for 1.5B model}
\label{table:logical_deduction_1.5B}

\end{table}

\begin{table}[h!]

\centering

\begin{tabular}{ccccc}
\hline
\textbf{Model Type} & \textbf{$\alpha$} & \textbf{Tokens} & \textbf{Avg. Pass Rate} & \textbf{Ctx.} \\ \hline
DPO & - & 13147 & 0.523 & 32k \\
Distill & - & 12837 & 0.563 & 32k \\
Distill & - & 12091 & 0.570 & 24k \\
Distill & - & 10467 & 0.543 & 16k \\
Distill & - & 6814 & 0.390 & 8k \\
RL & 0 & 11800 $\pm$ 593.175   & 0.541 $\pm$ 0.026 & 32k \\
RL & 0.05 & 11328 $\pm$ 578.224 & 0.518 $\pm$ 0.014 & 32k \\
RL & 0.1 & 10802 $\pm$ 371.321 & 0.530 $ \pm$ 0.019 & 32k \\
RL & 0.2 & 9410 $\pm$ 84.672 & 0.523 $\pm$ 0.010 & 32k \\
RL & 0.4 & 8798 $\pm$ 811.499 & 0.499 $\pm$ 0.032 & 32k \\
SFT & - & 5806 & 0.183 & 32k\\
\hline
\vspace{1em}
\end{tabular}
\caption{AIME2024 results for 7B model}
\label{table:datasets/converted_aime_dataset_7B}
\end{table}

\begin{table}[h!]
    \centering
\begin{tabular}{ccccc}
\hline
\textbf{Model Type} & \textbf{$\alpha$} & \textbf{Tokens} & \textbf{Avg. Pass Rate} & \textbf{Ctx.} \\ \hline
DPO & - & 3277 & 0.908 & 32k \\
Distill & - & 4086 & 0.932 & 32k \\
Distill & - & 4009 & 0.927 & 24k \\
Distill & - & 3818 & 0.926 & 16k \\
Distill & - & 3316 & 0.889 & 8k \\
RL & 0 & 3416 $\pm$ 192.578   & 0.927 $\pm$ 0.005 & 32k \\
RL & 0.05 & 2976 $\pm$ 143.409 & 0.912 $\pm$ 0.004 & 32k \\
RL & 0.1 & 2568 $\pm$ 206.046 & 0.910 $ \pm$ 0.010 & 32k \\
RL & 0.2 & 2121 $\pm$ 121.491 & 0.891 $\pm$ 0.020 & 32k \\
RL & 0.4 & 1678 $\pm$ 152.337 & 0.844 $\pm$ 0.020 & 32k \\
SFT & - & 1407 & 0.782 & 32k\\
\hline
\end{tabular}
\vspace{1em}
\caption{MATH500 results for 7B model}
\label{table:di-zhang-fdu/MATH500_7B}

\end{table}

\begin{table}[h!]
    \centering
\begin{tabular}{ccccc}
\hline
\textbf{Model Type} & \textbf{$\alpha$} & \textbf{Tokens} & \textbf{Avg. Pass Rate} & \textbf{Ctx.} \\ \hline
DPO & - & 506 & 0.859 & 32k \\
Distill & - & 1547 & 0.920 & 32k \\
Distill & - & 1547 & 0.920 & 24k \\
Distill & - & 1545 & 0.919 & 16k \\
Distill & - & 1530 & 0.920 & 8k \\
RL & 0 & 1271 $\pm$ 72.028   & 0.926 $\pm$ 0.002 & 32k \\
RL & 0.05 & 746 $\pm$ 224.053 & 0.907 $\pm$ 0.011 & 32k \\
RL & 0.1 & 528 $\pm$ 177.573 & 0.903 $ \pm$ 0.008 & 32k \\
RL & 0.2 & 263 $\pm$ 45.282 & 0.872 $\pm$ 0.018 & 32k \\
RL & 0.4 & 133 $\pm$ 30.929 & 0.836 $\pm$ 0.007 & 32k \\
SFT & - & 427 & 0.881 & 32k\\
\hline
\vspace{1em}
\end{tabular}
\caption{GSM8k results for 7B model}
\label{table:openai/gsm8k_7B}
\end{table}

\begin{table}[h!]
    \centering
\begin{tabular}{ccccc}
\hline
\textbf{Model Type} & \textbf{$\alpha$} & \textbf{Tokens} & \textbf{Avg. Pass Rate} & \textbf{Ctx.} \\ \hline
DPO & - & 577 & 0.664 & 32k \\
Distill & - & 727 & 0.670 & 32k \\
Distill & - & 720 & 0.670 & 24k \\
Distill & - & 713 & 0.670 & 16k \\
Distill & - & 706 & 0.670 & 8k \\
RL & 0 & 623 $\pm$ 17.082   & 0.662 $\pm$ 0.012 & 32k \\
RL & 0.05 & 532 $\pm$ 17.618 & 0.673 $\pm$ 0.014 & 32k \\
RL & 0.1 & 524 $\pm$ 44.455 & 0.662 $ \pm$ 0.011 & 32k \\
RL & 0.2 & 464 $\pm$ 9.552 & 0.664 $\pm$ 0.004 & 32k \\
RL & 0.4 & 415 $\pm$ 21.115 & 0.653 $\pm$ 0.007 & 32k \\
SFT & - & 725 & 0.657 & 32k\\
\hline
\vspace{1em}
\end{tabular}
\caption{CommonSenseQA results for 7B model}
\label{table:commonsense_qa_7B}

\end{table}

\begin{table}[h!]
    \centering
\begin{tabular}{ccccc}
\hline
\textbf{Model Type} & \textbf{$\alpha$} & \textbf{Tokens} & \textbf{Avg. Pass Rate} & \textbf{Ctx.} \\ \hline
DPO & - & 2880 & 0.803 & 32k \\
Distill & - & 3905 & 0.826 & 32k \\
Distill & - & 3855 & 0.826 & 24k \\
Distill & - & 3792 & 0.829 & 16k \\
Distill & - & 3388 & 0.803 & 8k \\
RL & 0 & 3411 $\pm$ 292.207   & 0.822 $\pm$ 0.006 & 32k \\
RL & 0.05 & 2703 $\pm$ 224.043 & 0.796 $\pm$ 0.029 & 32k \\
RL & 0.1 & 2485 $\pm$ 262.447 & 0.806 $ \pm$ 0.030 & 32k \\
RL & 0.2 & 2023 $\pm$ 182.110 & 0.803 $\pm$ 0.008 & 32k \\
RL & 0.4 & 1627 $\pm$ 216.580 & 0.777 $\pm$ 0.022 & 32k \\
SFT & - & 3566 & 0.830 & 32k\\
\hline
\vspace{1em}
\end{tabular}
\caption{Logical Deduction results for 7B model}
\label{table:logical_deduction_7B}
\end{table}

\begin{table}[h!]
\centering
\begin{tabular}{lcc}
\toprule
\textbf{Method} & \textbf{Normalized Accuracy} & \textbf{Normalized Tokens} \\
\midrule
R1 & 1.0000 & 1.0000 \\
DPO & 0.9602 & 0.7369 \\
SFT & 0.8216 & 0.5968 \\
$\alpha = 0$ & 0.9891 & 0.8613 \\
$\alpha = 0.05$ & 0.9702 & 0.7034 \\
$\alpha = 0.1$ & 0.9728 & 0.6338 \\
$\alpha = 0.2$ & 0.9594 & 0.5158 \\
$\alpha = 0.4$ & 0.9231 & 0.4341 \\
Instruct & 0.6881 & 0.2685 \\
O1-Pruner & 0.9596 & 0.7713 \\

\hline
\vspace{1em}
\end{tabular}
\caption{Comparison of Normalized Accuracy and Tokens across methods.}
\label{tab:normalized_accuracy_tokens}
\end{table}

\clearpage
\section{Prompt template for training}
\label{sec:prompt_appendix}

For all training purposes, we use the following prompt template: 

\texttt{Please reason step by step, and put your final answer within \textbackslash boxed\{\}. Question: \$QUESTION}

\section{Omitted short proofs}
\label{app:proofs}

\subsection{Proof of \cref{prop:accuracy}}
\begin{proof}
Notice that the objective function 
\cref{eqn:lenpen} can be written as 
\begin{align}
\label{eqn:general}
    \frac{1}{|\mathcal X|}\sum_{\x{}\in \mathcal X}
    \frac{1}{N}\sum_{i \in [N]} p_\theta(\answer_i \mid \x{}) g(\answer_{i})
\end{align}
for a positive function $g(\cdot{})> 0$.
Consider the following lemma.

\begin{lemma}
\label{lem:mass}
For a given prompt $\x{}$, if there exist a correct answer $\answer' = \goldanswer $, then the the population maximizer $\p_\theta$ of \cref{eqn:general} places no mass on the incorrect answers for that prompt, i.e.,
\begin{align}
p_\theta(\answer{} \mid \x{}) = 0, \; \text{if} \;  \answer \neq \goldanswer.
\end{align}
\end{lemma}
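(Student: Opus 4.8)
The plan is to reduce the optimization to a separate finite-dimensional linear program for each prompt and then invoke the elementary fact that a linear functional on the probability simplex is maximized on the vertices corresponding to its largest coefficient. First I would use \cref{asm:realizability}: since the tabular parametrization lets the conditional distributions $\p_\theta(\cdot\mid\x{})$ at different prompts be chosen completely independently of one another, maximizing \cref{eqn:general} over $\theta$ is equivalent to maximizing, \emph{prompt by prompt}, the inner sum $\sum_{i\in[N]} p(\answer_i\mid\x{})\,g(\answer_i)$ over the simplex $\{\,p(\cdot\mid\x{}) : p(\answer_i\mid\x{})\ge 0,\ \sum_i p(\answer_i\mid\x{}) = 1\,\}$. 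So I would fix the prompt $\x{}$ of the statement for the remainder of the argument.

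Next I would identify the coefficient function explicitly. Reading off \cref{eqn:lenpen}, for this fixed $\x{}$ we have $g(\answer_i) = \1\{\answer_i = \goldanswer(\x{})\}\,\bigl(1 - \coeff\, f(\len(\answer_i))\bigr)$. Since $\coeff\in[0,1)$ and $f(\cdot)\in[0,1]$, the factor $1-\coeff f(\len(\answer_i))$ is strictly positive, so $g(\answer_i)>0$ on correct answers and $g(\answer_i)=0$ on incorrect ones; this is the only place the restriction $\coeff<1$ enters, and it is what makes the ``positive function $g$'' of \cref{eqn:general} positive precisely on the correct responses. By the hypothesis of the lemma there is at least one index with $\answer_i=\goldanswer(\x{})$, hence $\max_{i\in[N]} g(\answer_i)>0$.

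Finally I would apply the linear-programming observation: the maximum of $p\mapsto\sum_i p(\answer_i\mid\x{})\,g(\answer_i)$ over the simplex equals $\max_{i\in[N]}g(\answer_i)$, and every maximizing $p$ is supported on $\arg\max_{i\in[N]}g(\answer_i)$. Because each incorrect answer has $g=0$ strictly below this positive maximum, no incorrect answer can lie in the argmax set, so the population maximizer assigns it probability zero --- which is exactly the claim of \cref{lem:mass}, and it then feeds into \cref{prop:accuracy} since all the surviving mass sits on correct answers. I do not expect a genuine obstacle here; the only points needing care are (i) the decoupling across prompts, which rests entirely on \cref{asm:realizability}, (ii) using $\coeff<1$ to keep $g$ strictly positive on the correct answers, and (iii) phrasing the conclusion for an \emph{arbitrary} maximizer, since the maximizer need not be unique when several correct answers share the same length.
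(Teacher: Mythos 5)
Your proposal is correct, and it reaches the lemma by an argument that is equivalent in substance to the paper's but packaged differently. The paper argues by contradiction: assuming an optimal $\p_\theta$ puts positive mass on an incorrect answer, it reallocates that mass proportionally onto the correct answers and notes that the objective strictly increases. You instead decouple the objective prompt-by-prompt via \cref{asm:realizability} and invoke the standard fact that a linear functional on the probability simplex is maximized exactly on the faces attaining the largest coefficient. Both arguments hinge on the same observation --- the per-prompt objective is linear in $p(\cdot\mid\x{})$ with a coefficient $g$ that vanishes on incorrect answers and is strictly positive on correct ones (this is where $\coeff<1$ and $f\in[0,1]$ enter, and your reading of the paper's loosely stated ``positive function $g>0$'' as being positive \emph{precisely on the correct responses} is the right one). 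Your route buys two small things. First, a stronger conclusion: every maximizer is supported on $\arg\max_i g(\answer_i)$, i.e.\ on the \emph{shortest} correct answers, not merely on correct answers. Second, it sidesteps a degenerate case the paper glosses over: if $\p_\theta$ happens to place zero total mass on the correct answers, the paper's reallocation $p_{\theta'}(\answer\mid\x{})\propto p_\theta(\answer\mid\x{})$ is undefined, whereas your direct LP argument handles that case automatically (such a $p_\theta$ attains value $0<\max_i g(\answer_i)$ and so cannot be optimal). Your closing caveat about phrasing the claim for an arbitrary, possibly non-unique maximizer is also apt.
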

\begin{proof}
Suppose the above claim did not hold; in other words, suppose that for some incorrect answer $\answer{} \neq \goldanswer{}$, we have that 
$p_\theta(\answer{} \mid \x{}) > 0$
and that $p_\theta$ maximizes 
\cref{eqn:general}.
Then consider the distribution $\p_{\theta'}$ defined as
\begin{align}
    p_{\theta'}(\answer{} \mid \x{}) & = 0, \; \text{if} \; \answer{} \neq \goldanswer{} \\
    p_{\theta'}(\answer{} \mid \x{}) & \propto p_{\theta}(\answer{} \mid \x{}), \; \text{if} \; \answer{} = \goldanswer{}.
\end{align}
It can be verified that such distribution increases the value of the objective function \eqref{eqn:general} because it places more mass on the positive terms, contradicting the optimality of $\p_\theta$.
\end{proof}

\cref{lem:mass} can be applied to establish the following: if for prompt $\x{}$ there exists a correct answer $\answer = \goldanswer$, then
\begin{align}
    \p_{\theta^\star_{eff}}(\answer{} \mid \x{}) & = 0, \; \text{if} \;  \answer \neq \goldanswer 
\end{align}
which implies that both $\p_{\theta^\star_{eff}}$ 
has its support on the correct answers only, proving the claim.
\end{proof}

\subsection{Proof of \cref{prop:optimality}}

\begin{proof}



Assume the contrapositive, that is $\exists \; \x \in \mathcal{X}, \exists \; \y'$ such that $\y'=\y^*(x)$
\[ 
\E_{\y\sim p_{\theta^*_{eff}(x)}}\big[\len(y)\big] > \len(\y') \] 
Then consider the modified distribution $p_{\theta'}$ which places all mass on the response $\y'$, that is, 
\[
p_{\theta'}(y|x) = \1\{y=y'\}
\]
then
\[
\E_{\y\sim p_{\theta^*_{eff}}(x)}
\big[\len(y)\big] > \len(\y') = \E_{\y\sim p_{\theta'}(x)} \big[\len(y)\big]
\]
Since $f$ is a monotonically increasing function bounded in $[0, 1]$ and $\alpha \in [0, 1)$
\[
\E_{\y\sim p_{\theta^*_{eff}}(x)}
\big[1 - \coeff \cdot f(\len(y))\big] < \E_{\y\sim p_{\theta'}(x)} \big[1 - \coeff \cdot f(\len(y))\big]
\]
Since $\accuracy(p_{\theta^*_{eff}})=\accuracy(p_{\theta'})=1$, we have that 

\[
\Rightarrow \E_{\y\sim p_{\theta^*_{eff}}(x)}
\big[\1\{y=y^*(x)\}(1 - \coeff \cdot f(\len(y))\big]) < \E_{\y\sim p_{\theta'}(x)} \big[\1\{y=y^*(x)\}(1 - \coeff \cdot f(\len(y)))\big]
\]

This would imply that $\theta^*_{eff}$ is not the population-level maximizer which is a contradiction.


    
\end{proof}
\clearpage
\section{Do Distilled Models follow Length Constraints?}
\label{sec:length_constraints}
One simple way to improve efficiency of reasoning models would be to simply prompt with an instruction saying: `Respond in less than X tokens'. This would prevent very verbose responses from the model. However, we perform an experiment to check whether the distilled models even have such a capability. For the MATH500 test set, we prompt the model in the following manner: 

\texttt{Please think step by step and answer in less than X tokens.}

\texttt{Question: \{question\}}

\texttt{Answer:}

We vary X in the range 256, 512, 1024, 2048, 4096 and measure the number of output tokens. The results for the 1.5B and 7B Distilled models are in Tables \ref{tab:length_constraints_1.5B} and \ref{tab:length_constraints_7B}. The results clearly demonstrate that their is no correlation between the requested number of tokens and the number of tokens that are actually generated by the model. This leads us to believe that the Distilled models are not capable of following length constraints out-of-the-box.

\begin{table}[h!]
\centering
\begin{tabular}{c|c}
\textbf{Token Limit} & \textbf{Tokens Generated} \\
\hline
256  & 4609.34 \\
512  & 4915.71 \\
768  & 5228.85 \\
1024 & 4913.84 \\
1280 & 5306.68 \\
2048 & 5064.06 \\
4096 & 5245.11 \\
\end{tabular}
\vspace{1em}

\caption{This table lists the average number of tokens generated by Distilled-R1-Qwen-1.5B for varying token limits as mentioned in the prompt for the MATH500 test set.}
\label{tab:length_constraints_1.5B}
\end{table}

\begin{table}[h!]
\centering
\begin{tabular}{c|c}
\textbf{Token Limit} & \textbf{Tokens Generated} \\
\hline
256  & 3434.56 \\
512  & 3587.05 \\
768  & 3518.34 \\
1024 & 3716.17 \\
1280 & 3524.46 \\
2048 & 3688.01 \\
4096 & 3815.11 \\
\end{tabular}
\vspace{1em}
\caption{This table lists the average number of tokens generated by Distilled-R1-Qwen-7B for varying token limits as mentioned in the prompt for the MATH500 test set.}
\label{tab:length_constraints_7B}
\end{table}




\clearpage
\section{Length Reduction without Length Penalty?}
\label{sec:alpha_0}

Reduction in length when $\alpha=0$ is an intriguing observation. Recent work by \cite{liu2025understandingr1zeroliketrainingcritical} pointed out a bias in the GRPO loss function: it averages per-token loss across entire sequences, which unintentionally favors shorter correct sequences over longer correct ones, and longer incorrect sequences over shorter incorrect ones. Another inadvertent issue that this creates is that it increases loss weightage on problems which are easier since they generally have shorter Chain-of-Thoughts. This may explain the unexpected reduction in reasoning length, even when $\alpha=0$. We tested the fix proposed by \cite{liu2025understandingr1zeroliketrainingcritical} and observed that the length reduction disappears when the fix is applied. Table \ref{tab:dr_grpo_fix} below shows changes in normalized accuracy and token usage (relative to a baseline 7B distilled model). The average results highlight that the fix mitigates the unintended length bias.

\begin{table}[h!]
\centering
\begin{tabular}{lcccccc}
\toprule
Dataset & \makecell{RLOO+Fix \\($\Delta$ NT)} & \makecell{RLOO+Fix \\($\Delta$ NA)} & \makecell{RLOO \\($\Delta$ NT)} & \makecell {RLOO \\ ($\Delta$ NA)} & \makecell{ Baseline \\(NT) } & \makecell{Baseline \\ (NA)} \\
\midrule
MATH500   & 2.3   & -0.4  & -17.4  & -0.6  & 100 & 100 \\
AIME2024  & 8     & -3    & -10.9  & -3.6  & 100 & 100 \\
GSM8k     & -12.2 & -3.37 & -17.2  & 1.08  & 100 & 100 \\
\midrule
Average   & \textbf{-0.64} & -2.25 & -15.16 & -1.04 & 100 & 100 \\
\bottomrule
\vspace{1em}
\end{tabular}
\caption{Table showing the effects of fixes proposed by \cite{liu2025understandingr1zeroliketrainingcritical} $\Delta$ NT refers to change in normalized tokens. $\Delta$ NA refers to change in normalized accuracy. All numbers are normalized based on the Baseline scores. All experiments have been conducted on the 7B Distilled model.}
\label{tab:dr_grpo_fix}
\end{table}

\section{Training dynamics}
\label{sec:training_dynamics}

We present the performance on the training dataset in \cref{fig:training_dynamics}. Notably, setting $\coeff = 0$ corresponds to applying RL without any length penalty. Increasing $\coeff$ results in a significant reduction in token usage—up to 50\% compared to the initial model—while maintaining the same level of accuracy as at the beginning of RL training. Lower values of $\coeff$ improve performance while still reducing the number of tokens.


\begin{figure*}[h!]
\centering
\includegraphics[width=\textwidth]{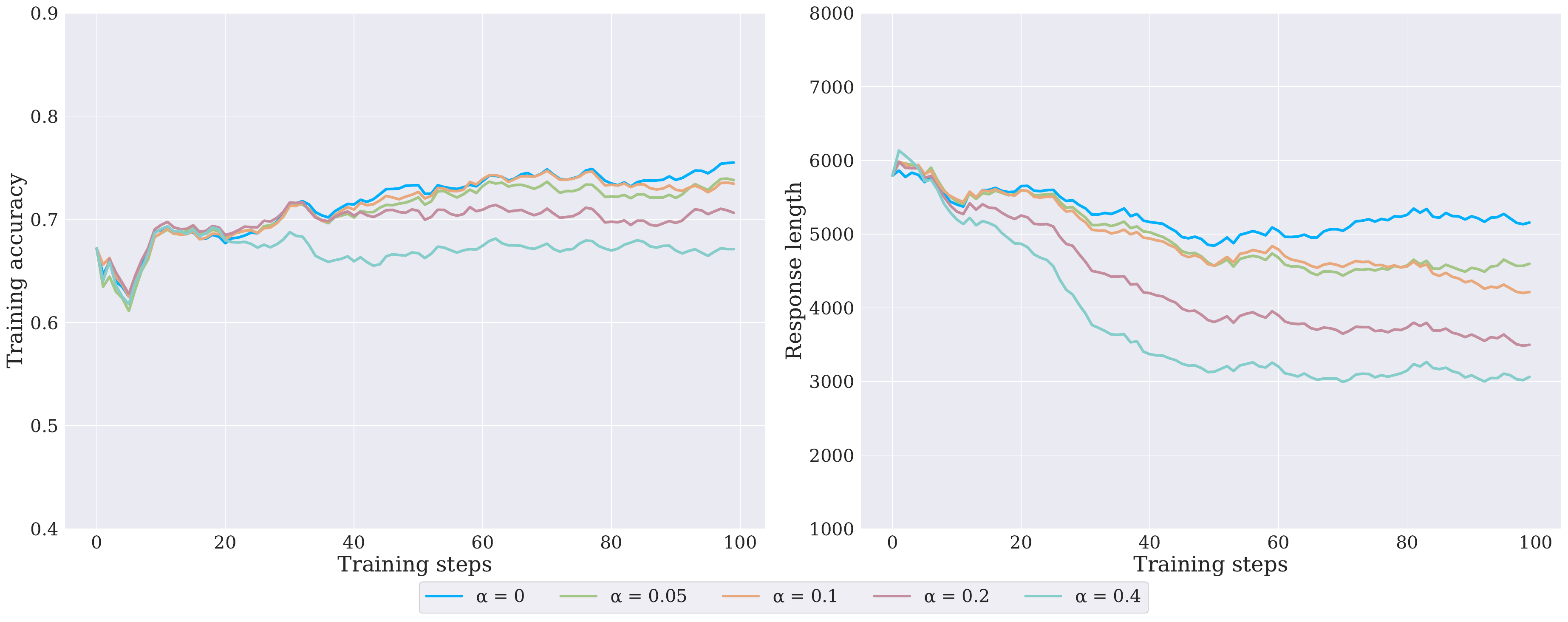}
\caption{The figure shows the dynamics of the training accuracy and the corresponding generation lengths with varying values of $\alpha$ for the 7B model. The training accuracy and response length have been smoothed out using running averages over 25 training iterations.} 
\label{fig:training_dynamics}
\end{figure*}

\clearpage

\section{Difficulty based analysis}
\label{sec:difficulty_analysis}

As discussed in the Results \ref{sec:results} earlier, we observe a bigger reduction in easier datasets like GSM8K as compared to harder datasets such as AIME2024. We hypothesize that this is because for easier datasets, reasoning models have a larger relative `spread'. 
To study this rigorously, we compute the quantities: $\frac{\textsc{STD}(x)}{\textsc{MEAN}(x)}$ and $\frac{\textsc{MEAN}(x)-\textsc{MIN}(x)}{\textsc{MEAN}(x)}$ for GSM8K, MATH500 and AIME2024 datasets for the Distilled 7B model.  The values we get are reported in Table \ref{tab:relative_spread}. As we can see that the relative spread is much larger for the easier dataset (GSM8K) compared to the hardest dataset (AIME2024). This clearly demonstrates that reasoning models waste relatively more tokens thinking about easier problems compared to hard problems.


\begin{table}[t]
    \centering
    
\begin{tabular}{c|cc}
\toprule
Dataset & $\frac{\textsc{STD}(x)}{\textsc{MEAN}(x)}$  & $\frac{\textsc{MEAN}(x)-\textsc{MIN}(x)}{\textsc{MEAN}(x)}$  \\
\midrule
GSM8K & 0.357 & 0.437 \\
MATH500 & 0.271 & 0.323 \\
AIME & 0.254 & 0.264 
\vspace{1em}
\end{tabular}
\label{tab:relative_spread}
\caption{Relative spread for different datasets for the Distilled 7B model. We only aggregate over prompts where there is atleast 1 correct response out of 8 generations.}
\end{table}
\vspace{3em}

\section{Length penalty when training a base model.}
\label{sec:base_model_expt}

\begin{figure*}[h!]
\centering
\includegraphics[width=\textwidth]{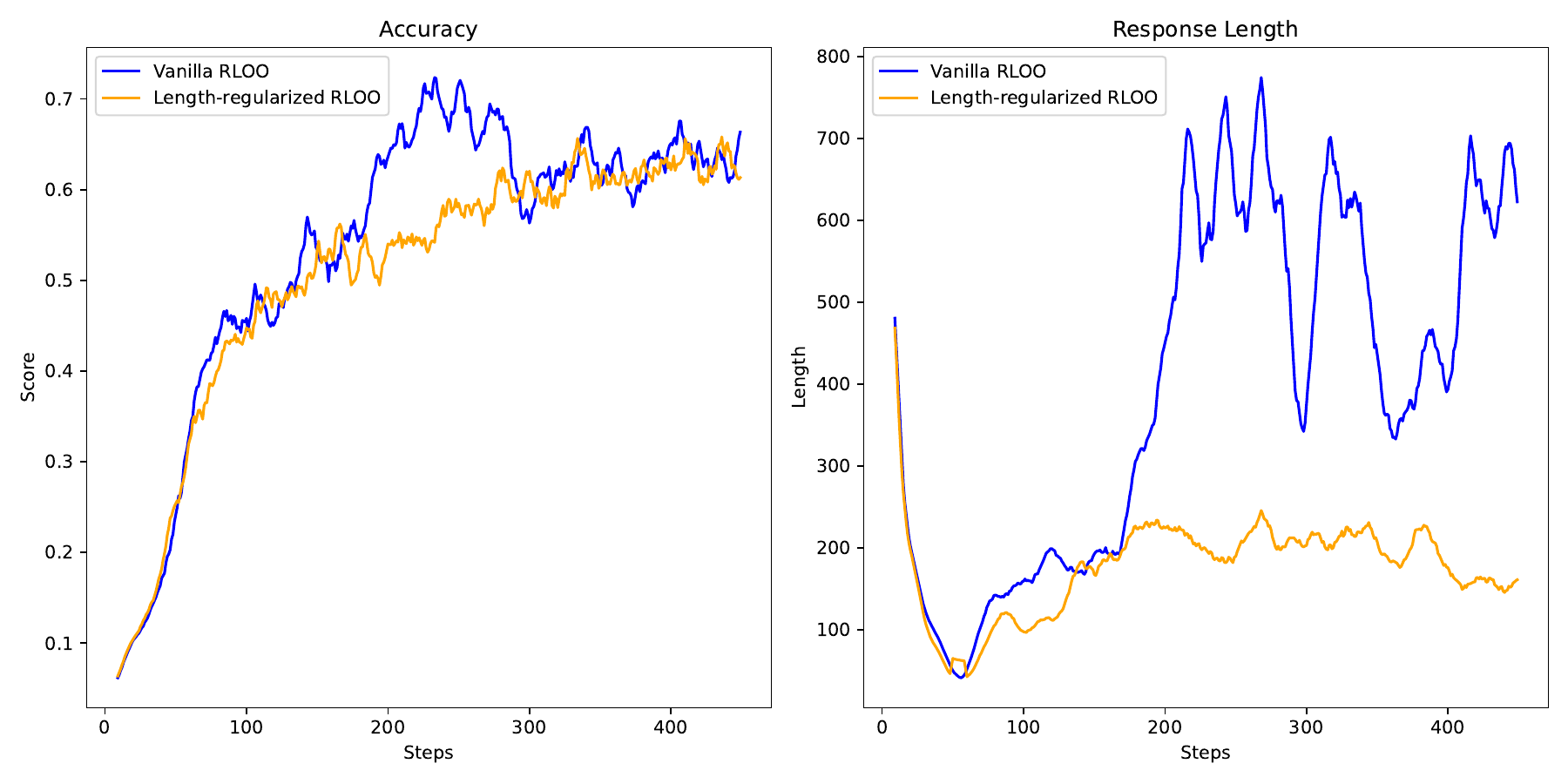}
\caption{The figure shows the dynamics of the training accuracy and the corresponding generation lengths for vanilla and length regularized RL on a base model for the Countdown task.} 
\label{fig:length_and_scores}
\end{figure*}

Base models such as Qwen or Llama generally do not exhibit long CoTs before RLVR training. However, as training progresses, their response length and accuracy increase. In general, training a base model for reasoning is a very compute-intensive task \citep{liu2025prorlprolongedreinforcementlearning}. However, we do a preliminary investigation of the effect of length-regularized training on a small-scale replication of long CoT behaviour in the Countdown task as done in TinyZero by \cite{tinyzero}.

We use the Qwen2.5-3B base model and run RLOO with and without length penalty (with $\alpha=0.1$) with a context window limit of 2048 tokens. The training curves can be seen in Figure \ref{fig:length_and_scores}. We make the following observations as training progresses:

\begin{itemize}
    \item \textbf{Convergence is slower when training using the length penalty}

    Vanilla RLOO reaches peak performance at about 220 RL steps, after which it saturates due to context window limits whereas our method reaches peak performance at about 400 RL steps.

    \item \textbf{Vanilla RLOO reaches higher peak performance}

    Vanilla RLOO reaches a peak performance of 67.48\% on the validation set whereas length-regularization reaches a peak performance of 59.68%

    \item \textbf{During training, when controlled for performance, length penalized training offers a more efficient model}

    As training progresses and performance improves, vanilla RLOO’s response length grows significantly. However, for the same training accuracy in length-regularized RLOO, the corresponding response length is significantly lower. Notice that we have only tried one combination of hyperparameters.

Results are given in Table \ref{tab:rloo_lengths}

\end{itemize}

\begin{table}[h!]
\centering
\begin{tabular}{lcc}
\toprule
\textbf{Training accuracy} &
\makecell{\textbf{Vanilla}\\\textbf{RLOO Response}\\\textbf{Length}} &
\makecell{\textbf{Length-regularized}\\\textbf{RLOO Response}\\\textbf{Length}} \\
\midrule
50\% & 271.23 & 222.11 \\
53\% & 297.43 & 203.59 \\
56\% & 373.45 & 215.89 \\
59\% & 432.12 & 138.78 \\
\bottomrule
\vspace{1em}
\end{tabular}
\caption{Comparison of response lengths for Vanilla vs. Length-regularized RLOO models.}
\label{tab:rloo_lengths}
\end{table}

We also observe syntactic differences in the responses learnt by both models:

For instance, an example response for the vanilla RLOO model was:

\textbf{Prompt:}
\texttt{Using the numbers [71, 3, 72], create an equation that equals 95. You can use basic arithmetic operations (+, -, *, /) and each number can only be used once. Show your work in <think> </think> tags. And return the final answer in <answer> </answer> tags, for example <answer> (1 + 2) / 3 </answer>.}

\textbf{Vanilla RLOO response}
\texttt{Let me solve this step by step. <think>72 + 71 = 143, then 143 - 3 = 140, but this doesn't work. Try: 3 + 72 = 75, then 75 + 71 = 146, but this doesn't work. Try: 71 * 3 = 213, then 213 - 72 = 141, but this doesn't work. Try: 71 + 72 = 143, then 143 - 3 = 140, but this doesn't work. Try: 72 * 3 = 216, then 216 - 71 = 145, but this doesn't work. Try: 3 * 72 = 216, then 216 - 71 = 145, but this doesn't work. Try: 3 + 72 = 75, then 75 * 71 = 5425, but this doesn't work. Try: 71 / 3 = 23.(3), then 23.(3) + 72 = 95.73, but this doesn't work. Try: 71 - 72 = -1, then -1 + 3 = 2, but this doesn't work. Try: 72 / 71 = 1.(03), then 1.(03) + 72 = 73.(03), but this doesn't work. Try: 72 - 71 = 1, then 1 + 3 = 4, but this doesn't work. Try: 3 * 71 = 213, then 213 + 72 = 285, but this doesn't work. Try: 71 - 72 = -1, then -1 * 3 = -3, but this doesn't work. Try: 3 * 72 = 216, then 216 + 71 = 287, but this doesn't work. Try: 72 - (71 * 3) = 72 - 213 = -141, but this doesn't work. Try: (72 + 71) * 3 = 143 * 3 = 430, but this doesn't work. Try: 71 + (72 * 3) = 71 + 216 = 287, but this doesn't work. Try: 71 - (72 / 3) = 71 - 24 = 47, but this doesn't work. Try: 72 - (71 / 3) = 72 - 23.(3) = 48.(7), but this doesn't work. Try: 72 + 71 / 3 = 72 + 23.(3) = 95.(3), but this doesn't work. Try: 71 - (72 - 3) = 71 - 75 = -4, but this doesn't work. Try: 71 + 72 - 3 = 143 - 3 = 140, but this doesn't work. Try: 72 / 71 * 3 = 3 * (72 / 71) = 3 * 1.(03) = 3.(09), but this doesn't work. Finally: 3 * (71 + 72) = 3 * 143 = 43.(9), but this doesn't work. Try: 71 - 72 + 72 * 3 = -1 + 216 = 215, but this doesn't work. Ah-ha! 72 - 71 + 72 * 3 = 1 + 216 = 217, but this doesn't work. One more: 3 + 71 * 72 = 3 + 5112 = 5115, but this doesn't work. There goes: (71 + 72) / 3 = 143 / 3 = 47.(6), but this doesn't work. One more: 72 - 71 - 72 / 3 = 1 - 24 = -23, but this doesn't work. There is none: 72 / (71 - 72) = 72 / 1 = 72, but this doesn't work. Which one finally: 72 - (71 + 72 / 3) = 72 - (71 + 24) = 72 - 95 = -23, but this doesn't work. Think deep: 71 - 72 + 72 / 3 = -1 + 24 = 23, but this doesn't work. Finally: <answer>71 - 72 / 3 + 72</answer></thought>}

However, for the length regularized RLOO variant, we observe the following

\textbf{Prompt:}
\texttt{Using the numbers [32, 45, 76, 54], create an equation that equals 53. You can use basic arithmetic operations (+, -, *, /) and each number can only be used once. Show your work in <think> </think> tags. And return the final answer in <answer> </answer> tags, for example <answer> (1 + 2) / 3 </answer>.}

\textbf{Length regularized RLOO response}
\texttt{Let me solve this step by step. <think> 76-54=22 22+45=67 67-32=35 Nope Try: 54+45=99 99-76=23 23+32=55 Nope Try: 76+54=130 130-45=85 85-32=53 This works! </th>
<answer>76+54-45-32</answer>}

Our results in this toy setup demonstrate that even though length-regularization slows down convergence and reduces peak performance, it discovers more token-efficient solutions when controlled for performance. Due to this, we believe that our method, as intended in the paper, \textbf{would be more effective when used as a post training method compared to doing RLVR from scratch on a base model.}

\pagebreak
\section*{NeurIPS Paper Checklist}

\begin{enumerate}

\item {\bf Claims}
    \item[] Question: Do the main claims made in the abstract and introduction accurately reflect the paper's contributions and scope?
    \item[] Answer: \answerYes 
    \item[] Justification: The experiments demonstrate that our approach allows the derivation of a family of models on the accuracy-compute tradeoff curve.
    \item[] Guidelines:
    \begin{itemize}
        \item The answer NA means that the abstract and introduction do not include the claims made in the paper.
        \item The abstract and/or introduction should clearly state the claims made, including the contributions made in the paper and important assumptions and limitations. A No or NA answer to this question will not be perceived well by the reviewers. 
        \item The claims made should match theoretical and experimental results, and reflect how much the results can be expected to generalize to other settings. 
        \item It is fine to include aspirational goals as motivation as long as it is clear that these goals are not attained by the paper. 
    \end{itemize}

\item {\bf Limitations}
    \item[] Question: Does the paper discuss the limitations of the work performed by the authors?
    \item[] Answer: \answerYes{} 
    \item[] Justification: Yes, see section \ref{sec:limitations}
    \item[] Guidelines:
    \begin{itemize}
        \item The answer NA means that the paper has no limitation while the answer No means that the paper has limitations, but those are not discussed in the paper. 
        \item The authors are encouraged to create a separate "Limitations" section in their paper.
        \item The paper should point out any strong assumptions and how robust the results are to violations of these assumptions (e.g., independence assumptions, noiseless settings, model well-specification, asymptotic approximations only holding locally). The authors should reflect on how these assumptions might be violated in practice and what the implications would be.
        \item The authors should reflect on the scope of the claims made, e.g., if the approach was only tested on a few datasets or with a few runs. In general, empirical results often depend on implicit assumptions, which should be articulated.
        \item The authors should reflect on the factors that influence the performance of the approach. For example, a facial recognition algorithm may perform poorly when image resolution is low or images are taken in low lighting. Or a speech-to-text system might not be used reliably to provide closed captions for online lectures because it fails to handle technical jargon.
        \item The authors should discuss the computational efficiency of the proposed algorithms and how they scale with dataset size.
        \item If applicable, the authors should discuss possible limitations of their approach to address problems of privacy and fairness.
        \item While the authors might fear that complete honesty about limitations might be used by reviewers as grounds for rejection, a worse outcome might be that reviewers discover limitations that aren't acknowledged in the paper. The authors should use their best judgment and recognize that individual actions in favor of transparency play an important role in developing norms that preserve the integrity of the community. Reviewers will be specifically instructed to not penalize honesty concerning limitations.
    \end{itemize}

\item {\bf Theory assumptions and proofs}
    \item[] Question: For each theoretical result, does the paper provide the full set of assumptions and a complete (and correct) proof?
    \item[] Answer: \answerYes{} 
    \item[] Justification: See \cref{app:proofs} and \cref{sec:assumptions_and_props}
    \item[] Guidelines:
    \begin{itemize}
        \item The answer NA means that the paper does not include theoretical results. 
        \item All the theorems, formulas, and proofs in the paper should be numbered and cross-referenced.
        \item All assumptions should be clearly stated or referenced in the statement of any theorems.
        \item The proofs can either appear in the main paper or the supplemental material, but if they appear in the supplemental material, the authors are encouraged to provide a short proof sketch to provide intuition. 
        \item Inversely, any informal proof provided in the core of the paper should be complemented by formal proofs provided in appendix or supplemental material.
        \item Theorems and Lemmas that the proof relies upon should be properly referenced. 
    \end{itemize}

    \item {\bf Experimental result reproducibility}
    \item[] Question: Does the paper fully disclose all the information needed to reproduce the main experimental results of the paper to the extent that it affects the main claims and/or conclusions of the paper (regardless of whether the code and data are provided or not)?
    \item[] Answer: \answerYes{} 
    \item[] Justification: 
    See \cref{app:implementation_details}. Code and data will also be provided.
    \item[] Guidelines:
    \begin{itemize}
        \item The answer NA means that the paper does not include experiments.
        \item If the paper includes experiments, a No answer to this question will not be perceived well by the reviewers: Making the paper reproducible is important, regardless of whether the code and data are provided or not.
        \item If the contribution is a dataset and/or model, the authors should describe the steps taken to make their results reproducible or verifiable. 
        \item Depending on the contribution, reproducibility can be accomplished in various ways. For example, if the contribution is a novel architecture, describing the architecture fully might suffice, or if the contribution is a specific model and empirical evaluation, it may be necessary to either make it possible for others to replicate the model with the same dataset, or provide access to the model. In general. releasing code and data is often one good way to accomplish this, but reproducibility can also be provided via detailed instructions for how to replicate the results, access to a hosted model (e.g., in the case of a large language model), releasing of a model checkpoint, or other means that are appropriate to the research performed.
        \item While NeurIPS does not require releasing code, the conference does require all submissions to provide some reasonable avenue for reproducibility, which may depend on the nature of the contribution. For example
        \begin{enumerate}
            \item If the contribution is primarily a new algorithm, the paper should make it clear how to reproduce that algorithm.
            \item If the contribution is primarily a new model architecture, the paper should describe the architecture clearly and fully.
            \item If the contribution is a new model (e.g., a large language model), then there should either be a way to access this model for reproducing the results or a way to reproduce the model (e.g., with an open-source dataset or instructions for how to construct the dataset).
            \item We recognize that reproducibility may be tricky in some cases, in which case authors are welcome to describe the particular way they provide for reproducibility. In the case of closed-source models, it may be that access to the model is limited in some way (e.g., to registered users), but it should be possible for other researchers to have some path to reproducing or verifying the results.
        \end{enumerate}
    \end{itemize}

\item {\bf Open access to data and code}
    \item[] Question: Does the paper provide open access to the data and code, with sufficient instructions to faithfully reproduce the main experimental results, as described in supplemental material?
    \item[] Answer: \answerNA{} 
    \item[] Justification: 
    \item[] Guidelines:
    \begin{itemize}
        \item The answer NA means that paper does not include experiments requiring code.
        \item Please see the NeurIPS code and data submission guidelines (\url{https://nips.cc/public/guides/CodeSubmissionPolicy}) for more details.
        \item While we encourage the release of code and data, we understand that this might not be possible, so “No” is an acceptable answer. Papers cannot be rejected simply for not including code, unless this is central to the contribution (e.g., for a new open-source benchmark).
        \item The instructions should contain the exact command and environment needed to run to reproduce the results. See the NeurIPS code and data submission guidelines (\url{https://nips.cc/public/guides/CodeSubmissionPolicy}) for more details.
        \item The authors should provide instructions on data access and preparation, including how to access the raw data, preprocessed data, intermediate data, and generated data, etc.
        \item The authors should provide scripts to reproduce all experimental results for the new proposed method and baselines. If only a subset of experiments are reproducible, they should state which ones are omitted from the script and why.
        \item At submission time, to preserve anonymity, the authors should release anonymized versions (if applicable).
        \item Providing as much information as possible in supplemental material (appended to the paper) is recommended, but including URLs to data and code is permitted.
    \end{itemize}

\item {\bf Experimental setting/details}
    \item[] Question: Does the paper specify all the training and test details (e.g., data splits, hyperparameters, how they were chosen, type of optimizer, etc.) necessary to understand the results?
    \item[] Answer: \answerYes{} 
    \item[] Justification:  See \cref{sec:experiments} and \cref{app:implementation_details}. 
    \item[] Guidelines:
    \begin{itemize}
        \item The answer NA means that the paper does not include experiments.
        \item The experimental setting should be presented in the core of the paper to a level of detail that is necessary to appreciate the results and make sense of them.
        \item The full details can be provided either with the code, in appendix, or as supplemental material.
    \end{itemize}

\item {\bf Experiment statistical significance}
    \item[] Question: Does the paper report error bars suitably and correctly defined or other appropriate information about the statistical significance of the experiments?
    \item[] Answer: \answerYes{} 
    \item[] Justification: See \cref{fig:sensitivity_plot}
    \item[] Guidelines:
    \begin{itemize}
        \item The answer NA means that the paper does not include experiments.
        \item The authors should answer "Yes" if the results are accompanied by error bars, confidence intervals, or statistical significance tests, at least for the experiments that support the main claims of the paper.
        \item The factors of variability that the error bars are capturing should be clearly stated (for example, train/test split, initialization, random drawing of some parameter, or overall run with given experimental conditions).
        \item The method for calculating the error bars should be explained (closed form formula, call to a library function, bootstrap, etc.)
        \item The assumptions made should be given (e.g., Normally distributed errors).
        \item It should be clear whether the error bar is the standard deviation or the standard error of the mean.
        \item It is OK to report 1-sigma error bars, but one should state it. The authors should preferably report a 2-sigma error bar than state that they have a 96\% CI, if the hypothesis of Normality of errors is not verified.
        \item For asymmetric distributions, the authors should be careful not to show in tables or figures symmetric error bars that would yield results that are out of range (e.g. negative error rates).
        \item If error bars are reported in tables or plots, The authors should explain in the text how they were calculated and reference the corresponding figures or tables in the text.
    \end{itemize}

\item {\bf Experiments compute resources}
    \item[] Question: For each experiment, does the paper provide sufficient information on the computer resources (type of compute workers, memory, time of execution) needed to reproduce the experiments?
    \item[] Answer: \answerYes{} 
    \item[] Justification: 
    See Section \cref{app:implementation_details}
    \item[] Guidelines:
    \begin{itemize}
        \item The answer NA means that the paper does not include experiments.
        \item The paper should indicate the type of compute workers CPU or GPU, internal cluster, or cloud provider, including relevant memory and storage.
        \item The paper should provide the amount of compute required for each of the individual experimental runs as well as estimate the total compute. 
        \item The paper should disclose whether the full research project required more compute than the experiments reported in the paper (e.g., preliminary or failed experiments that didn't make it into the paper). 
    \end{itemize}
    
\item {\bf Code of ethics}
    \item[] Question: Does the research conducted in the paper conform, in every respect, with the NeurIPS Code of Ethics \url{https://neurips.cc/public/EthicsGuidelines}?
    \item[] Answer: \answerYes 
    \item[] Justification: 
    \item[] Guidelines:
    \begin{itemize}
        \item The answer NA means that the authors have not reviewed the NeurIPS Code of Ethics.
        \item If the authors answer No, they should explain the special circumstances that require a deviation from the Code of Ethics.
        \item The authors should make sure to preserve anonymity (e.g., if there is a special consideration due to laws or regulations in their jurisdiction).
    \end{itemize}

\item {\bf Broader impacts}
    \item[] Question: Does the paper discuss both potential positive societal impacts and negative societal impacts of the work performed?
    \item[] Answer: \answerYes{} 
    \item[] Justification: See \cref{sec:introduction}
    \item[] Guidelines:
    \begin{itemize}
        \item The answer NA means that there is no societal impact of the work performed.
        \item If the authors answer NA or No, they should explain why their work has no societal impact or why the paper does not address societal impact.
        \item Examples of negative societal impacts include potential malicious or unintended uses (e.g., disinformation, generating fake profiles, surveillance), fairness considerations (e.g., deployment of technologies that could make decisions that unfairly impact specific groups), privacy considerations, and security considerations.
        \item The conference expects that many papers will be foundational research and not tied to particular applications, let alone deployments. However, if there is a direct path to any negative applications, the authors should point it out. For example, it is legitimate to point out that an improvement in the quality of generative models could be used to generate deepfakes for disinformation. On the other hand, it is not needed to point out that a generic algorithm for optimizing neural networks could enable people to train models that generate Deepfakes faster.
        \item The authors should consider possible harms that could arise when the technology is being used as intended and functioning correctly, harms that could arise when the technology is being used as intended but gives incorrect results, and harms following from (intentional or unintentional) misuse of the technology.
        \item If there are negative societal impacts, the authors could also discuss possible mitigation strategies (e.g., gated release of models, providing defenses in addition to attacks, mechanisms for monitoring misuse, mechanisms to monitor how a system learns from feedback over time, improving the efficiency and accessibility of ML).
    \end{itemize}
    
\item {\bf Safeguards}
    \item[] Question: Does the paper describe safeguards that have been put in place for responsible release of data or models that have a high risk for misuse (e.g., pretrained language models, image generators, or scraped datasets)?
    \item[] Answer: \answerNA{} 
    \item[] Justification: 
    \item[] Guidelines:
    \begin{itemize}
        \item The answer NA means that the paper poses no such risks.
        \item Released models that have a high risk for misuse or dual-use should be released with necessary safeguards to allow for controlled use of the model, for example by requiring that users adhere to usage guidelines or restrictions to access the model or implementing safety filters. 
        \item Datasets that have been scraped from the Internet could pose safety risks. The authors should describe how they avoided releasing unsafe images.
        \item We recognize that providing effective safeguards is challenging, and many papers do not require this, but we encourage authors to take this into account and make a best faith effort.
    \end{itemize}

\item {\bf Licenses for existing assets}
    \item[] Question: Are the creators or original owners of assets (e.g., code, data, models), used in the paper, properly credited and are the license and terms of use explicitly mentioned and properly respected?
    \item[] Answer: \answerYes{} 
    \item[] Justification: All models and datasets have been cited.
    \item[] Guidelines:
    \begin{itemize}
        \item The answer NA means that the paper does not use existing assets.
        \item The authors should cite the original paper that produced the code package or dataset.
        \item The authors should state which version of the asset is used and, if possible, include a URL.
        \item The name of the license (e.g., CC-BY 4.0) should be included for each asset.
        \item For scraped data from a particular source (e.g., website), the copyright and terms of service of that source should be provided.
        \item If assets are released, the license, copyright information, and terms of use in the package should be provided. For popular datasets, \url{paperswithcode.com/datasets} has curated licenses for some datasets. Their licensing guide can help determine the license of a dataset.
        \item For existing datasets that are re-packaged, both the original license and the license of the derived asset (if it has changed) should be provided.
        \item If this information is not available online, the authors are encouraged to reach out to the asset's creators.
    \end{itemize}

\item {\bf New assets}
    \item[] Question: Are new assets introduced in the paper well documented and is the documentation provided alongside the assets?
    \item[] Answer: \answerNA{} 
    \item[] Justification: 
    \item[] Guidelines:
    \begin{itemize}
        \item The answer NA means that the paper does not release new assets.
        \item Researchers should communicate the details of the dataset/code/model as part of their submissions via structured templates. This includes details about training, license, limitations, etc. 
        \item The paper should discuss whether and how consent was obtained from people whose asset is used.
        \item At submission time, remember to anonymize your assets (if applicable). You can either create an anonymized URL or include an anonymized zip file.
    \end{itemize}

\item {\bf Crowdsourcing and research with human subjects}
    \item[] Question: For crowdsourcing experiments and research with human subjects, does the paper include the full text of instructions given to participants and screenshots, if applicable, as well as details about compensation (if any)? 
    \item[] Answer: \answerNA{} 
    \item[] Justification: 
    \item[] Guidelines:
    \begin{itemize}
        \item The answer NA means that the paper does not involve crowdsourcing nor research with human subjects.
        \item Including this information in the supplemental material is fine, but if the main contribution of the paper involves human subjects, then as much detail as possible should be included in the main paper. 
        \item According to the NeurIPS Code of Ethics, workers involved in data collection, curation, or other labor should be paid at least the minimum wage in the country of the data collector. 
    \end{itemize}

\item {\bf Institutional review board (IRB) approvals or equivalent for research with human subjects}
    \item[] Question: Does the paper describe potential risks incurred by study participants, whether such risks were disclosed to the subjects, and whether Institutional Review Board (IRB) approvals (or an equivalent approval/review based on the requirements of your country or institution) were obtained?
    \item[] Answer: \answerNA{} 
    \item[] Justification: 
    \item[] Guidelines:
    \begin{itemize}
        \item The answer NA means that the paper does not involve crowdsourcing nor research with human subjects.
        \item Depending on the country in which research is conducted, IRB approval (or equivalent) may be required for any human subjects research. If you obtained IRB approval, you should clearly state this in the paper. 
        \item We recognize that the procedures for this may vary significantly between institutions and locations, and we expect authors to adhere to the NeurIPS Code of Ethics and the guidelines for their institution. 
        \item For initial submissions, do not include any information that would break anonymity (if applicable), such as the institution conducting the review.
    \end{itemize}

\item {\bf Declaration of LLM usage}
    \item[] Question: Does the paper describe the usage of LLMs if it is an important, original, or non-standard component of the core methods in this research? Note that if the LLM is used only for writing, editing, or formatting purposes and does not impact the core methodology, scientific rigorousness, or originality of the research, declaration is not required.
    \item[] Answer: \answerNA{} 
    \item[] Justification: 
    \item[] Guidelines:
    \begin{itemize}
        \item The answer NA means that the core method development in this research does not involve LLMs as any important, original, or non-standard components.
        \item Please refer to our LLM policy (\url{https://neurips.cc/Conferences/2025/LLM}) for what should or should not be described.
    \end{itemize}

\end{enumerate}

\end{document}